%% file: main.tex
\documentclass[11pt,letterpaper]{article}

\input{preamble}

\begin{document}

	\title{Neural Approximation by Function Composition:\\
	Rigidity and Doubly Exponential Convergence}
	\author{Wentao Huang\thanks{Department of Mathematics, The University of Hong Kong, Hong Kong, P.R. China. E-mail address:
\href{mailto:huangwt@hku.hk}{\textit{huangwt@hku.hk}}.}
\qquad
Haizhang Zhang\thanks{Corresponding author. School of Mathematics, Sun Yat-sen University, Guangzhou, P.R. China. E-mail address: \href{mailto:zhhaizh2@sysu.edu.cn}{\textit{zhhaizh2@sysu.edu.cn}}. Supported in part by the National Natural Science Foundation of China (Grant Nos. 12671642 and 12371103) and the Guangdong Basic and Applied Basic Research Foundation (Grant No. 2024A1515011194).}}
	\date{}
	\maketitle
\begin{abstract}

Deep neural networks approximate functions by composing affine maps with nonlinear activations, but how composition itself creates approximation power is not yet fully understood. We investigate a fundamental mechanism: geometrically weighted sums of iterates of a single scalar generator function. This mechanism underpins the classical tent-map construction of the function \(x - x^2\) and related recursive representations used by Yarotsky, W. E, et al., to analyze the approximation powers of deep neural networks.

First, we establish a rigidity theorem: for continuous piecewise linear generators with a finite number of segments, any \(C^3\) function that can be represented in this way is at most quadratic. For non-affine quadratic functions, the geometric factor is at least $1/4$. This result both reveals limitations of the tent-map approach and complements existing methods based on hierarchical bases and recursive polynomial constructions. Second, using an exact remainder identity as guidance, we construct a smooth generator whose iterates yield doubly exponential error decay in total depth for square approximation and, through multiplication modules, for each fixed polynomial. For power series with absolutely summable coefficients on \([-1,1]^d\), distributing depth according to monomial degree yields a uniform approximation error of order \(O(e^{-cL^{1/d}})\) on each interior cube. % Second, using an exact remainder identity as guidance, we construct a smooth generator such that for power series with absolutely summable coefficients, distributing depth according to the degree of each monomial results in a uniform approximation error of order \(O(e^{-cL^{1/d}})\). %These networks have depth at most \(L\), width at most four, size \(O(L)\), and incorporate skip connections and a single fixed non-polynomial activation function that is globally \(1\)-Lipschitz and \(C^1\) smooth, with weights and biases bounded independently of the desired accuracy. 
These findings demonstrate how generator dynamics and remainder estimates govern depth allocation and approximation rates of deep neural networks.
\end{abstract}

\noindent\textbf{Keywords:} neural network approximation;
function composition; iterative functional equations;
piecewise linear rigidity; doubly exponential convergence;
bounded weights

\noindent\textbf{Mathematics Subject Classification 2020:} 39B12, 41A25, 41A46, 68T07.

\input{sections/introduction}
\input{sections/rigidity}
\input{sections/generator}
\input{sections/approximation}
\input{sections/conclusion}

\section*{Declaration of AI use}

The original mathematical ideas and the initial manuscript were
developed by the authors without AI assistance. During subsequent
revisions, the authors used OpenAI Codex (GPT-5.6-Sol and GPT-6-Astra) to assist
with language editing, reference checking,
preparation of code for mathematical figures, and refinement of the
approximation error for the degree-dependent
allocation of network depth in Section~\ref{sec:Approx}.
All material retained from this assistance was independently checked
and, where necessary, revised by the authors. The authors assume
full responsibility for the accuracy, originality, and integrity
of the final manuscript.

%\section*{Conflict of interest}
% The authors declare that they have no conflict of interest.

{\small
\bibliographystyle{amsplain-full}
\bibliography{ref}
}
\end{document}

%% file: preamble.tex
\usepackage{verbatim}
\usepackage{float}
\usepackage{amsmath}
\usepackage{amsfonts}
\usepackage{amssymb}
\usepackage{amsthm}
\usepackage{graphicx}
\usepackage{tikz}
\usetikzlibrary{arrows.meta,positioning}
\usepackage{pgfplots}
\pgfplotsset{compat=1.17}
\usepackage{microtype}
\usepackage{xcolor}
\usepackage{comment}
\usepackage{hyperref}
\hypersetup{hidelinks}

\newtheoremstyle{templateplain}
  {6pt}{6pt}{\itshape}{}{\bfseries}{}{0.5em}
  {\thmname{#1}\thmnumber{ #2}\thmnote{ (#3)}}
\newtheoremstyle{templateupright}
  {6pt}{6pt}{\normalfont}{}{\bfseries}{}{0.5em}
  {\thmname{#1}\thmnumber{ #2}\thmnote{ (#3)}}

\theoremstyle{templateplain}
\newtheorem{theorem}{Theorem}[section]
\newtheorem{lemma}[theorem]{Lemma}
\newtheorem{proposition}[theorem]{Proposition}
\newtheorem{corollary}[theorem]{Corollary}
\theoremstyle{templateupright}

\newtheorem{definition}[theorem]{Definition}
\newtheorem{remark}[theorem]{Remark}

\numberwithin{equation}{section}
\numberwithin{figure}{section}
\numberwithin{table}{section}

\makeatletter
\renewenvironment{proof}[1][\proofname]{\par
  \pushQED{\qed}%
  \normalfont \topsep6\p@\@plus6\p@\relax
  \trivlist
  \item[\hskip\labelsep\itshape #1\@addpunct{:}]\ignorespaces
}{%
  \popQED\endtrivlist\@endpefalse
}
\makeatother

\newcommand{\R}{\mathbb{R}}

\newcommand{\PL}{\mathcal{PL}}
\newcommand{\G}{\mathcal{G}}
\newcommand{\GPL}{\mathcal{G}_{\mathrm{PL}}}
\newcommand{\Bdd}{\mathcal{B}}
\newcommand{\Self}{\mathfrak{S}}
\newcommand{\Id}{\operatorname{Id}}
\newcommand{\norm}[1]{\left\lVert #1\right\rVert}

\def \relu {{\mathrm{ReLU}}}

\makeatletter

\newcommand{\Rmnum}[1]{\expandafter\@slowromancap\romannumeral #1@}
\makeatother

%% file: sections/introduction.tex
\section{Introduction}
\label{sec:Intro}

Function representation and approximation play a central role in
mathematics and machine learning. Classical methods, including power
series, Fourier expansions, and wavelets, approximate a target by linear
combinations of prescribed functions. A feedforward neural network
constructs intermediate representations layer by layer by composing
affine maps with nonlinear activations. This raises a natural question:
how can composition improve approximation as depth increases?

In this paper, we study a construction that repeatedly applies a single
scalar function, called the generator, and forms geometrically weighted
sums of its iterates. We investigate how the choice of generator affects
the functions that can be represented and the convergence of their
approximations. Our starting point is the classical approximation of the
square function using the tent map
\begin{equation}
 s(x)=
 \begin{cases}
  2x,&0\leq x\leq1/2,\\
  2-2x,&1/2<x\leq1,
 \end{cases}
 \qquad
 s(x)=2x-4\relu(x-\tfrac12)\quad(x\in[0,1]),
 \label{eq:tent-map-definition}
\end{equation}
where \(\relu(t)=\max\{0,t\}\). Write
\(\varphi^{\circ j}\) for the \(j\)-fold self-composition of a map
\(\varphi\), and set \(\varphi^{\circ0}=\Id\). The identity used in
\cite{Liang,Yarotsky} is
\begin{equation}
 F(x):=x-x^2=\sum_{\ell=1}^{\infty}4^{-\ell}s^{\circ\ell}(x),
 \qquad x\in[0,1].
 \label{eq:tent-map-square-expansion}
\end{equation}
The partial sums interpolate \(F\) at the dyadic points \(k/2^L\);
see Figure~\ref{fig:tent-map-interpolation}. Reusing the same tent map
therefore gives a square approximation, from which the
difference-of-squares identity supplies a multiplication module.
E and Wang \cite{E} used this construction for fixed-width ReLU
approximation of low-dimensional analytic functions.

\begin{figure}[htbp]
 \centering
 \begin{tikzpicture}
  \begin{axis}[
   width=0.90\textwidth,
   height=0.36\textwidth,
   xmin=0,xmax=1,
   ymin=0,ymax=0.27,
   xlabel={\(x\)},
   ylabel={function value},
   xtick={0,0.1,...,1},
   ytick={0,0.05,0.10,0.15,0.20,0.25},
   yticklabels={\(0\),\(0.05\),\(0.10\),\(0.15\),\(0.20\),\(0.25\)},
   tick label style={font=\footnotesize},
   label style={font=\footnotesize},
   tick align=outside,
   tick style={black!70},
   legend style={font=\footnotesize,at={(0.5,1.04)},anchor=south,
                 legend columns=4,draw=none,fill=none,
                 inner xsep=0pt,inner ysep=1pt,
                 column sep=1.5em,row sep=2pt,
                 cells={anchor=west}},
   legend image post style={xscale=1.25},
   grid=major,
   grid style={gray!25,line width=0.35pt},
   axis line style={black!70},
   line width=1pt]
   \addlegendimage{black,line width=1.4pt}
   \addlegendentry{\(x-x^2\)}
   \addplot[red!75!black,densely dashed] coordinates
    {(0,0) (0.5,0.25) (1,0)};
   \addlegendentry{\(L=1\)}
   \addplot[blue!75!black,
            dash pattern=on 4.5pt off 1.8pt on 0.8pt off 1.8pt] coordinates
    {(0,0) (0.25,0.1875) (0.5,0.25) (0.75,0.1875) (1,0)};
   \addlegendentry{\(L=2\)}
   \addplot[teal!70!black,densely dotted,line width=1.15pt] coordinates
    {(0,0) (0.125,0.109375) (0.25,0.1875) (0.375,0.234375)
     (0.5,0.25) (0.625,0.234375) (0.75,0.1875)
     (0.875,0.109375) (1,0)};
   \addlegendentry{\(L=3\)}
   \addplot[black,line width=1.4pt,domain=0:1,samples=201,forget plot]
    {x-x^2};
  \end{axis}
 \end{tikzpicture}
 \caption{Successive piecewise linear interpolants
 \(\sum_{\ell=1}^{L}4^{-\ell}s^{\circ\ell}\) of \(x-x^2\).}
 \label{fig:tent-map-interpolation}
\end{figure}

The expansion follows from the one-step relation
\(F=\tfrac14s+\tfrac14F\circ s\). More generally,
for a self-map \(\varphi:I\to I\) of a compact interval and \(|\beta|<1\),
\begin{equation}
 f=\alpha\sum_{\ell=1}^{\infty}\beta^{\ell-1}\varphi^{\circ\ell}
 \quad\Longleftrightarrow\quad
 f-\alpha\varphi=\beta f\circ\varphi
 \qquad(f\text{ bounded}).
 \label{eq:intro-functional-equation}
\end{equation}
Writing
\(S_L=\alpha\sum_{\ell=1}^L\beta^{\ell-1}\varphi^{\circ\ell}\),
iteration gives the exact remainder
\begin{equation}
 f(x)-S_L(x)=\beta^L f(\varphi^{\circ L}(x)).
 \label{eq:intro-exact-remainder}
\end{equation}
The series solution and this identity belong to the classical theory;
see \cite[Chapters~1--4]{KuczmaChoczewskiGer} and
\cite[Chapter~I]{SinghManhas} for the general framework, and
\cite[Chapter~3]{Despres} for the related neural-network viewpoint.
The remainder separates the geometric factor from the values of the
target along the iterates. It gives a common starting point for studying
representation limits, faster convergence, and depth allocation.

Our first results concern piecewise linear generators. By
Theorems~\ref{thm:notes-smooth-rigidity} and~\ref{thm:notes-beta-optimal},
a continuous piecewise linear generator with finitely many pieces can
produce a \(C^3\) output only if that output is a polynomial of degree at most two. A non-affine
quadratic requires \(\beta\geq1/4\), and its uniform truncation error is
\(\Theta(\beta^L)\) for each fixed representation. A fixed point where
the target is nonzero prevents the iterated target from contributing
additional uniform decay. The tent map attains both bounds.

He, Li, and Xu~\cite{He} give a hierarchical basis interpretation of
the tent-map construction and establish a related quadratic rigidity
result for the fixed tent map, without requiring geometric coefficients.
Our result allows
the piecewise linear generator to vary, while retaining geometric
weights. Despr\'es and Ancellin~\cite{DespresAncellin2020} construct
recursive representations of arbitrary univariate polynomials of the form
\(H=e_0+\sum_i\beta_iH\circ e_i\), using several piecewise linear
composition maps and a separate piecewise linear additive term;
see also \cite[Chapter~3]{Despres}.
These works also discuss the classical tent-map identity.
Our results concern the additional constraint that a single generator
\(\varphi\) supplies both the additive term \(\alpha\varphi\)
and the composition map in \(f\circ\varphi\).
Their polynomial existence results do not, by themselves, determine
the rigidity or sharp truncation rates under this constraint.
Our rigidity results do not limit polynomial approximation by general
ReLU networks.

We next use the remainder to design a smooth generator whose iterates
approach zero. For fixed admissible parameters \(\alpha>1\) and
\(0<\beta<1\), as specified in Section~\ref{sec:Overcome},
the generator satisfies
\(0\leq\varphi(x)\leq x^2/\alpha\) on \([-1,1]\).
The square module \(Q_n\), using \(n\) iterations and depth proportional
to \(n\), satisfies
\begin{equation}
 0\leq t^2-Q_n(t)\leq\varepsilon_n|t|^{2^{n+1}},
 \qquad
 \varepsilon_n=\beta^n\alpha^{-2(2^n-1)},\quad |t|\leq1.
 \label{eq:intro-local-remainder}
\end{equation}
This gives doubly exponential error decay in depth. The construction uses the fixed activation
\[
 \psi(t)=\sqrt{1+(t_+)^2}-1,
 \qquad t_+=\max\{t,0\}.
\]
This activation is nonpolynomial, belongs to \(C^1(\mathbb R)\),
and is globally \(1\)-Lipschitz. The weights and biases are bounded
independently of the approximation accuracy.

The local factor \(|t|^{2^{n+1}}\) connects this estimate to larger
networks. The associated multiplication modules satisfy
\(|\mathcal M_n(u,v)|\leq|uv|\), preserving the small magnitudes of
their factors. Composing them gives a doubly exponential error bound
in total depth for each fixed polynomial on \([-1,1]^d\)
(Theorem~\ref{thm:notes-multivariate-polynomial}). On an interior cube,
balanced monomial factorizations give small inputs to higher-degree
modules, allowing them to use less depth. Choosing module depths by
degree gives total depth and size \(O(p^d)\) for all monomials through
degree \(p\), for fixed \(d\).

Combining this allocation with power-series truncation gives uniform
error \(O(e^{-cL^{1/d}})\) on each interior cube for targets with
absolutely summable power-series coefficients on \([-1,1]^d\).
Theorem~\ref{thm:notes-multivariate-analytic} gives depth at most \(L\),
width at most four, and size \(O(L)\). The networks reuse earlier
outputs through skip connections, with all connection weights counted
in the size. Depth and width count both nonlinear and linear hidden
layers and neurons; width does not bound storage.

For fixed \(d\), our size exponent \(1/d\) exceeds the \(1/(2d)\)
exponent of E and Wang~\cite{E} in the same power-series and
interior-cube setting. Opschoor, Schwab, and
Zech~\cite[Theorems~3.6 and~3.10]{Opschoor2022} obtain size exponents
\(1/(d+1)\) for ReLU and \(1/d\) for RePU under holomorphic-extension
assumptions. Section~\ref{sec:Approx} compares these results, including
the differences in domains, norms, architectures, and depth bounds.

For suitable smooth activations, classical finite-difference
constructions give fixed-size multiplication approximations using
weights that grow as the error tends to zero;
see \cite[Remark~2.15]{Opschoor2022} and \cite{Pinkus}.
Shen, Yang, and Zhang~\cite{Shen3} obtain arbitrary accuracy with a
fixed number of neurons using a specially constructed activation.
Their result does not provide accuracy-independent bounds on weights
and biases.
Here we study error decay with depth under a fixed activation and
bounded coefficients. Our bounds concern this construction and do
not assert optimality over activations or network architectures.

Foundations of neural approximation include \cite{Barron,Cybenko};
broader accounts of
approximation and compositional structure appear in
\cite{Daubechies,Devore1,DoleanMontanelli2026,Elbrachter,
Goodfellow,LeCun,PetersenZech2024,Poggio}.
Related constructions use sparse grids \cite{Montanelli1}, the
Kolmogorov--Arnold representation \cite{Montanelli2}, and width--depth
tradeoffs \cite{Shen1,Shen2}.
Other work addresses sharp Sobolev and Besov approximation bounds
\cite{Siegel2023}, norm constraints \cite{Jiao},
convergence as depth grows \cite{XuZhang2022,XuZhang2024},
and convolutional universality \cite{Zhou1}.

The remainder of the paper is organized as follows. Section~\ref{sec:GWC} develops the functional equation and proves the
piecewise linear rigidity results. Section~\ref{sec:Overcome} constructs
the smooth square module and establishes its convergence rate with
bounded weights.
Section~\ref{sec:Approx} uses the local error estimates to build polynomial
networks and allocate depth for analytic approximation.

%% file: sections/rigidity.tex
\section{Geometrically weighted compositions and piecewise linear rigidity}
\label{sec:GWC}

The tent-map identity \eqref{eq:tent-map-square-expansion} is one instance of a general self-similar mechanism.
Let \(I\) be a compact interval and let \(\varphi\) be a function from $I$ to $I$, called a self-map of $I$.
Suppose that \(f\) admits the geometrically weighted expansion
\begin{equation}
 f(x)=\alpha\sum_{\ell=1}^{\infty}
      \beta^{\ell-1}\varphi^{\circ\ell}(x),
 \qquad \alpha\in\R,\quad |\beta|<1.
 \label{eq:notes-geometric-representation}
\end{equation}
All iterates remain in \(I\), so the series converges uniformly. For every
truncation level \(L\geq1\), 
\[
 \begin{aligned}
 f(x)-\alpha\sum_{\ell=1}^{L}
 \beta^{\ell-1}\varphi^{\circ\ell}(x)
 &=\alpha\sum_{j=1}^{\infty}
   \beta^{L+j-1}\varphi^{\circ(L+j)}(x)\\
 &=\beta^L f\bigl(\varphi^{\circ L}(x)\bigr).
 \end{aligned}
\]
The remainder is a scaled copy of \(f\), evaluated at the current
iterate \(\varphi^{\circ L}(x)\). Its decay depends on both the
geometric factor and the values of the target along these iterates.
Taking \(L=1\) recovers the one-step
equation~\eqref{eq:intro-functional-equation}.

We first formulate the representation for arbitrary self-maps of a
compact interval. We then show how requiring a continuous piecewise
linear generator restricts both the smooth outputs and their truncation
rates. For non-affine quadratic outputs, the iterates cannot provide
additional uniform decay in the remainder. This limitation motivates the design of a new generator in
Section~\ref{sec:Overcome}.

\subsection{The compositional class and its basic properties}

For a nondegenerate compact interval \(I=[a,b]\), write
\[
 \Self(I):=\{\varphi:I\to I\}
\]
for the set of all self-maps of \(I\), and let \(\Bdd(I)\) be the space of bounded real-valued functions on \(I\), equipped with the uniform norm.
Throughout this paper, \(\|\cdot\|_{L^\infty(I)}\) denotes the supremum norm, not the essential supremum. No continuity, piecewise linearity, or differentiability
is required in the following definition.

\begin{definition}[Geometrically weighted compositional class]
Define
\begin{equation}
 \G(I):=
 \left\{
  \alpha\sum_{\ell=1}^{\infty}
  \beta^{\ell-1}\varphi^{\circ\ell}:
  \alpha\in\R,\quad |\beta|<1,\quad \varphi\in\Self(I)
 \right\}.
 \label{eq:notes-geometric-class}
\end{equation}
The series is understood in \(\Bdd(I)\) with the uniform norm.
\end{definition}

The definition is meaningful because all iterates take values in the
compact interval \(I\). Each series in \eqref{eq:notes-geometric-class}
therefore converges absolutely and uniformly. If \(\varphi\) is continuous,
then every iterate is continuous and the uniform limit belongs to
\(C(I)\). Continuous and piecewise linear generators can thus be treated
as additional restrictions within the same framework.

The unrestricted class alone imposes little constraint on the output.
For example, \(\G([-1,1])=\Bdd([-1,1])\): for bounded \(f\), choose
\(\alpha\geq\max\{1,\|f\|_{L^{\infty}([-1,1])}\}\), set \(\varphi=f/\alpha\),
and take \(\beta=0\). The substantive questions concern restrictions
on the generator and on the representation parameters.

We first establish three elementary properties of functions in \(\mathcal{G}(I)\). Scaling changes only the leading coefficient. Affine normalization allows us to reduce the analysis to \([0,1]\), provided that the target function is shifted and rescaled accordingly. The
functional equation then replaces the infinite series by a one-step identity and gives its exact finite-depth remainder.

\begin{lemma}[Scaling]
\label{lem:notes-scaling}
If \(f\in\G(I)\) and \(\gamma\in\R\), then
\(\gamma f\in\G(I)\). More precisely, multiplying a representation of
\(f\) by \(\gamma\) replaces \(\alpha\) by \(\gamma\alpha\) and leaves
\(\beta\) and \(\varphi\) unchanged.
\end{lemma}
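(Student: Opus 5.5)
The argument works directly from the definition of \(\G(I)\) in \eqref{eq:notes-geometric-class}. The only analytic input is that the defining series converges uniformly in \(\Bdd(I)\), and that multiplying by a fixed scalar is a bounded linear operation on \(\Bdd(I)\).

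First, I fix a representation
\(f=\alpha\sum_{\ell=1}^{\infty}\beta^{\ell-1}\varphi^{\circ\ell}\)
with \(\alpha\in\R\), \(|\beta|<1\), and \(\varphi\in\Self(I)\). I write
\(S_L=\alpha\sum_{\ell=1}^{L}\beta^{\ell-1}\varphi^{\circ\ell}\)
for the partial sums, so that \(\|f-S_L\|_{L^\infty(I)}\to0\) as \(L\to\infty\). For each \(L\), the finite-sum identity
\(\gamma S_L=(\gamma\alpha)\sum_{\ell=1}^{L}\beta^{\ell-1}\varphi^{\circ\ell}\)
is plain algebra.

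Second, I pass to the limit. Since
\[
\|\gamma f-\gamma S_L\|_{L^\infty(I)}=|\gamma|\,\|f-S_L\|_{L^\infty(I)}\to0,
\]
the series \((\gamma\alpha)\sum_{\ell\geq1}\beta^{\ell-1}\varphi^{\circ\ell}\) converges uniformly to \(\gamma f\). The parameters \(\gamma\alpha\in\R\) and \(\beta\) with \(|\beta|<1\) are admissible, and \(\varphi\in\Self(I)\) is the same self-map as before. It follows that \(\gamma f\in\G(I)\), and the representation changes exactly as claimed: \(\alpha\) becomes \(\gamma\alpha\), while \(\beta\) and \(\varphi\) are unchanged.

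There is no real obstacle in this argument. The only point that needs care is that the limit is taken in the supremum norm on \(\Bdd(I)\), as the definition requires, and not merely pointwise. The case \(\gamma=0\) needs no separate treatment: it gives \(\alpha=0\), which is allowed, and yields the zero function.
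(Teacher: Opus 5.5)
Your proposal is correct and matches the paper's proof, which simply writes \(\gamma f=(\gamma\alpha)\sum_{\ell\geq1}\beta^{\ell-1}\varphi^{\circ\ell}\) and observes that this has the required form. Your explicit passage to the limit through the partial sums in the uniform norm is a harmless added justification of the same step.
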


\begin{proof}
For any representation \eqref{eq:notes-geometric-representation},
\[
 \gamma f=(\gamma\alpha)
 \sum_{\ell=1}^{\infty}\beta^{\ell-1}\varphi^{\circ\ell},
\]
which has the required form.
\end{proof}

\begin{lemma}[Affine normalization of a representation]
\label{lem:notes-affine-normalization}
Suppose that \(f\in\mathcal{G}(I)\) and \(I=[a,b]\). Let
\[
 t(y)=\frac{y-a}{b-a},\qquad
 \chi=t\circ\varphi\circ t^{-1},
\]
and define
\begin{equation}
 \widetilde f(x)
 =\frac{f(t^{-1}(x))-\alpha a/(1-\beta)}{b-a},
 \qquad x\in[0,1].
 \label{eq:notes-normalized-target}
\end{equation}
Then \(\chi\in\Self([0,1])\) and \(\widetilde f\in\G([0,1])\), with
\begin{equation}
 \widetilde f(x)
 =\alpha\sum_{\ell=1}^{\infty}\beta^{\ell-1}\chi^{\circ\ell}(x).
 \label{eq:notes-normalized-representation}
\end{equation}
Continuity, finite piecewise linearity, and differentiability are
preserved by this affine normalization.
\end{lemma}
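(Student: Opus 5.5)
The plan is to conjugate the iterates by the affine map \(t\) and then use the geometric-series identity \(\sum_{\ell\ge1}\beta^{\ell-1}=1/(1-\beta)\) to absorb the shift \(a\). First I check that \(\chi\) is a self-map of \([0,1]\). Since \(t\) is an increasing affine bijection of \(I=[a,b]\) onto \([0,1]\) with inverse \(t^{-1}(x)=a+(b-a)x\), each \(x\in[0,1]\) gives \(t^{-1}(x)\in I\) and \(\varphi(t^{-1}(x))\in I\). Hence \(\chi(x)\in[0,1]\). By induction on \(\ell\), the telescoping cancellation \(t^{-1}\circ t=\Id_I\) gives
\[
 \chi^{\circ\ell}=t\circ\varphi^{\circ\ell}\circ t^{-1},
 \qquad\text{equivalently}\qquad
 \varphi^{\circ\ell}\circ t^{-1}=a+(b-a)\chi^{\circ\ell}.
\]

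Next I substitute this into the representation of \(f\) evaluated at \(t^{-1}(x)\):
\[
 f(t^{-1}(x))=\alpha\sum_{\ell=1}^\infty\beta^{\ell-1}\bigl(a+(b-a)\chi^{\circ\ell}(x)\bigr)
 =\frac{\alpha a}{1-\beta}+(b-a)\,\alpha\sum_{\ell=1}^\infty\beta^{\ell-1}\chi^{\circ\ell}(x).
\]
Splitting the series is legitimate because \(|\beta|<1\) and all terms are bounded, so both pieces converge absolutely and uniformly. Subtracting \(\alpha a/(1-\beta)\) and dividing by \(b-a>0\) gives \eqref{eq:notes-normalized-representation} directly, and therefore \(\widetilde f\in\G([0,1])\) with the same \(\alpha\) and \(\beta\).

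Finally, the preservation claims follow because \(t\) and \(t^{-1}\) are affine bijections. Composing \(\varphi\) on both sides with affine bijections preserves continuity and differentiability, with \(\chi'(x)=\varphi'(t^{-1}(x))\). It also preserves finite piecewise linearity: the breakpoints \(c_i\) of \(\varphi\) map to the breakpoints \(t(c_i)\) of \(\chi\), and linear pieces map to linear pieces. The same reasoning applies to \(f\) and \(\widetilde f\), which differ by an affine change of variable and an affine change of value.

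No step is a real obstacle. The only points needing care are bookkeeping the constant \(a/(1-\beta)\) coming from the geometric sum, and recording that \(b-a\neq0\) because \(I\) is nondegenerate.
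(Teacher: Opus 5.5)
Your proposal is correct and follows essentially the same route as the paper: conjugate the iterates as \(\chi^{\circ\ell}=t\circ\varphi^{\circ\ell}\circ t^{-1}\), then substitute into the series and absorb the shift via \(\sum_{\ell\ge1}\beta^{\ell-1}=1/(1-\beta)\). Your handling of the regularity claims (including \(\chi'=\varphi'\circ t^{-1}\) and the mapped breakpoints \(t(c_i)\)) just spells out what the paper summarizes in one sentence.
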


\begin{proof}
Since \(t\) maps \(I\) bijectively onto \([0,1]\), the conjugate
\(\chi\) is a self-map of \([0,1]\). For every \(\ell\geq1\),
\[
 \chi^{\circ\ell}(x)
 =\left(t\circ\varphi^{\circ\ell}\circ t^{-1}\right)(x)
 =\frac{\varphi^{\circ\ell}(t^{-1}(x))-a}{b-a}.
\]
Consequently,
\[
 \begin{aligned}
 \alpha\sum_{\ell=1}^{\infty}\beta^{\ell-1}\chi^{\circ\ell}(x)
 &=\frac{1}{b-a}
   \left(f(t^{-1}(x))
    -\alpha a\sum_{\ell=1}^{\infty}\beta^{\ell-1}\right)\\
 &=\frac{f(t^{-1}(x))-\alpha a/(1-\beta)}{b-a}
 =\widetilde f(x).
 \end{aligned}
\]
This proves \eqref{eq:notes-normalized-representation}. The regularity
assertions follow from the affine changes of variables and target values.
\end{proof}

\begin{lemma}[Functional-equation characterization]
\label{lem:notes-functional-equation}
For \(f\in\Bdd(I)\), \(\alpha\in\R\), \(|\beta|<1\), and
\(\varphi\in\Self(I)\), representation
\eqref{eq:notes-geometric-representation} holds if and only if
\begin{equation}
 f(x)-\alpha\varphi(x)=\beta f(\varphi(x)),
 \qquad x\in I.
 \label{eq:notes-functional-equation}
\end{equation}
For every integer \(L\geq1\), either condition gives
\begin{equation}
 f(x)-\alpha\sum_{\ell=1}^L
       \beta^{\ell-1}\varphi^{\circ\ell}(x)
 =\beta^L f(\varphi^{\circ L}(x)),
 \label{eq:notes-exact-remainder}
\end{equation}
and hence
\begin{equation}
 \left\|f-\alpha\sum_{\ell=1}^L
       \beta^{\ell-1}\varphi^{\circ\ell}\right\|_{L^\infty(I)}
 \leq|\beta|^L\|f\circ \varphi^{\circ L}\|_{L^\infty(I)}
 \le |\beta|^L \|f\|_{L^\infty(I)}.
 \label{eq:notes-remainder-bound}
\end{equation}
\end{lemma}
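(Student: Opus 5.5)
The plan is to prove the two directions of the equivalence separately, then derive the remainder identity by iterating the one-step equation. Throughout, let \(S_L=\alpha\sum_{\ell=1}^L\beta^{\ell-1}\varphi^{\circ\ell}\). Since \(\varphi\) maps \(I\) into the compact interval \(I\), every iterate satisfies \(|\varphi^{\circ\ell}|\le M:=\max\{|a|,|b|\}\). Hence \(S_L\) converges uniformly to a bounded limit because \(|\beta|<1\).

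For the forward direction, assume \eqref{eq:notes-geometric-representation}. Evaluate the series at \(\varphi(x)\), which lies in \(I\), and reindex:
\[
 \beta f(\varphi(x))=\alpha\sum_{\ell=1}^\infty\beta^{\ell}\varphi^{\circ(\ell+1)}(x)=\alpha\sum_{m=2}^\infty\beta^{m-1}\varphi^{\circ m}(x)=f(x)-\alpha\varphi(x).
\]
This is \eqref{eq:notes-functional-equation}.

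For the converse, assume \eqref{eq:notes-functional-equation} with \(f\) bounded. First prove \eqref{eq:notes-exact-remainder} by induction on \(L\). The case \(L=1\) is exactly the functional equation. For the inductive step, apply the functional equation at the point \(\varphi^{\circ L}(x)\in I\):
\[
 \beta^L f(\varphi^{\circ L}(x))=\alpha\beta^L\varphi^{\circ(L+1)}(x)+\beta^{L+1}f(\varphi^{\circ(L+1)}(x)).
\]
Substituting this into the level-\(L\) identity gives the level-\(L+1\) identity. Because \(f\) is bounded and \(|\beta|<1\), the right-hand side of \eqref{eq:notes-exact-remainder} is at most \(|\beta|^L\|f\|_{L^\infty(I)}\) in absolute value, uniformly in \(x\). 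Therefore \(S_L\to f\) uniformly, which is \eqref{eq:notes-geometric-representation} in \(\Bdd(I)\).

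The remainder identity must hold under either condition. Under the series condition, the forward direction gives the functional equation, and then the same induction applies. Finally, take suprema in \eqref{eq:notes-exact-remainder} to obtain the first inequality of \eqref{eq:notes-remainder-bound}. The second inequality holds because \(\varphi^{\circ L}(x)\in I\), so \(|f(\varphi^{\circ L}(x))|\le\|f\|_{L^\infty(I)}\), using the paper's convention that the norm is the pointwise supremum.

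The only step needing care is the converse, where boundedness of \(f\) is essential. Without it, \(\beta^Lf(\varphi^{\circ L})\) need not tend to zero, and the functional equation could have spurious unbounded solutions. Everything else is a routine reindexing.
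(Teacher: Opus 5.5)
Your proposal is correct and follows essentially the same route as the paper. The forward direction is the same first-term split, written as a reindexing at \(\varphi(x)\), and your induction on \(L\) is the paper's telescoping sum written out one step at a time, followed by the same bound \(|\beta|^L\|f\|_{L^\infty(I)}\to0\) that uses the boundedness of \(f\).
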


\begin{proof}
Suppose first that \eqref{eq:notes-geometric-representation} holds.
Separating its first term and using uniform convergence, we obtain
\[
 \begin{aligned}
 f(x)-\alpha\varphi(x)
 &=\alpha\sum_{\ell=2}^{\infty}
   \beta^{\ell-1}\varphi^{\circ\ell}(x)\\
 &=\beta f(\varphi(x)),
 \end{aligned}
\]
which is \eqref{eq:notes-functional-equation} with the same parameters.

Conversely, assume that \eqref{eq:notes-functional-equation} holds.
Substituting \(\varphi^{\circ(\ell-1)}(x)\) for \(x\) and multiplying
by \(\beta^{\ell-1}\) gives
\[
 \beta^{\ell-1}f(\varphi^{\circ(\ell-1)}(x))
 -\alpha\beta^{\ell-1}\varphi^{\circ\ell}(x)
 =\beta^\ell f(\varphi^{\circ\ell}(x)).
\]
Summing over \(\ell=1,\ldots,L\) cancels all intermediate terms and
yields \eqref{eq:notes-exact-remainder}. Since
\(\varphi^{\circ L}(I)\subseteq I\),
\[
 \|\beta^L f\circ\varphi^{\circ L}\|_{L^\infty(I)}
 \leq |\beta|^L\|f\|_{L^\infty(I)}\longrightarrow0.
\]
The partial sums therefore converge uniformly to \(f\), proving
\eqref{eq:notes-geometric-representation} and the stated bound.
\end{proof}

The characterization is for a fixed choice of \(\alpha\), \(\beta\), and
\(\varphi\), not merely an assertion that some representation exists.
This distinction matters below, where we derive restrictions on the
generator and on the geometric factor from the functional equation.
The exact remainder also separates two possible sources of decay:
the factor \(\beta^L\) and the value of \(f\) along the iterated map.

\subsection{The piecewise linear generator bottleneck}

We now impose piecewise linearity on the generator.  Let
\(\PL(I)\subset\Self(I)\) be the set of continuous self-maps
\(\varphi:I\to I\) for which there exists a finite partition
\(a=x_0<x_1<\cdots<x_m=b\) such that \(\varphi\) is affine on every
\([x_{i-1},x_i]\).  Define the subclass
\begin{equation}
 \GPL(I)
 :=
 \left\{
  \alpha\sum_{\ell=1}^{\infty}
  \beta^{\ell-1}\varphi^{\circ\ell}:
  \alpha\in\R,\quad |\beta|<1,\quad
  \varphi\in\PL(I)
 \right\}
 \subseteq\G(I).
 \label{eq:notes-pl-subclass}
\end{equation}
In the next theorem, the regularity assumption is imposed on
the output \(f\). Although iterating a piecewise linear map can create many new knots, the functional equation strongly constrains which \(C^{3}\) functions the resulting series can represent.

\begin{theorem}[Smooth rigidity]
\label{thm:notes-smooth-rigidity}
If
\[
 f\in\GPL([0,1])\cap C^3([0,1]),
\]
then \(f\) is a polynomial of degree at most two, including the affine and
constant cases.
\end{theorem}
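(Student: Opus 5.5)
The plan is to show that \(g:=f'''\) vanishes identically on \([0,1]\); since \(f\in C^3([0,1])\), this forces \(f\) to be a polynomial of degree at most two. By Lemma~\ref{lem:notes-functional-equation}, the representation is equivalent to \(f(x)-\alpha\varphi(x)=\beta f(\varphi(x))\) on \([0,1]\), for a fixed \(\varphi\in\PL([0,1])\) with knots \(0=x_0<\cdots<x_m=1\) and slopes \(p_i\) on \([x_{i-1},x_i]\). If \(\beta=0\), then \(f=\alpha\varphi\) is piecewise linear and \(C^1\), hence affine. So I assume \(\beta\neq0\).

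\textbf{Step 1 (local identities).} On each piece, \(\varphi(x)=p_ix+q_i\). Differentiating the functional equation three times in the interior of the piece gives
\[
 f'(x)-\alpha p_i=\beta p_i f'(\varphi(x)),\qquad g(x)=\beta p_i^3\, g(\varphi(x)).
\]
By continuity of \(f'\), \(g\) and \(\varphi\), both identities extend to the closed piece, using one-sided limits at its endpoints. Two consequences follow.

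\begin{enumerate}
\item If \(p_i=0\), then \(g\equiv0\) on \([x_{i-1},x_i]\).
\item At an interior knot \(x_i\) with \(p_i\neq p_{i+1}\), comparing the two one-sided versions at the same point \(y=\varphi(x_i)\) gives \(\beta(p_i^3-p_{i+1}^3)g(y)=0\). Since cubing is injective on \(\R\), this yields \(g(\varphi(x_i))=0\).
\end{enumerate}

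\textbf{Step 2 (structure on the nonvanishing set).} Let \(U=\{x\in[0,1]:g(x)\neq0\}\), which is relatively open, and suppose \(U\neq\emptyset\). Let \(J\) be a connected component of \(U\). By Step~1, \(J\) meets no zero-slope piece. From \(g(x)=\beta p_i^3 g(\varphi(x))\) we then get \(g(\varphi(x))\neq0\) for \(x\in J\), so \(\varphi(J)\subseteq U\).

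Next, \(J\) contains no genuine knot, that is, no interior point \(x_i\) with \(p_i\neq p_{i+1}\). Indeed, at such a knot Step~1 gives \(g(\varphi(x_i))=0\), contradicting \(\varphi(x_i)\in U\). Hence \(\varphi\) is affine on \(J\) with a single nonzero slope \(p\). Since \(\varphi(J)\) is connected and lies in \(U\), it lies in a single component \(J'\). On \(J\) we have \(|g(x)|=|\beta||p|^3|g(\varphi(x))|\).

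\textbf{Step 3 (integral contraction).} Set \(h=|g|^{1/3}\). On \(J\) this becomes \(h=|\beta|^{1/3}|\varphi'|\,h\circ\varphi\). Since \(\varphi\) is injective and affine on \(J\), the change of variables gives
\[
 \int_J h=|\beta|^{1/3}\int_{\varphi(J)}h\le|\beta|^{1/3}\int_{J'}h .
\]
The components of \(U\) are disjoint, so \(\sum_J\int_J h\le\int_0^1 h<\infty\). Consequently only finitely many components have \(\int_J h\) above any positive threshold. This guarantees that a component \(J^*\) maximizing \(\int_J h\) exists, and \(\int_{J^*}h>0\) because \(h>0\) on \(J^*\). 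Applying the inequality to \(J^*\) gives
\[
 \int_{J^*}h\le|\beta|^{1/3}\int_{J^*}h .
\]
Since \(|\beta|<1\), this contradicts \(\int_{J^*}h>0\). Therefore \(U=\emptyset\), so \(g=f'''\equiv0\) and \(f\) is a polynomial of degree at most two.

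The main obstacle is Step~3. A naive global integration over \([0,1]\) fails, because \(\int_0^1 |\varphi'|\,h\circ\varphi\) counts preimages with multiplicity, and that count can grow exponentially under iteration. The knot argument of Steps~1 and~2 is what makes the argument work: it shows \(\varphi\) is injective and affine on each component of \(\{f'''\neq0\}\), which removes the multiplicity. This is exactly where the third derivative is needed, through injectivity of \(p\mapsto p^3\); the second derivative would only give \(p^2\), which cannot distinguish the slopes \(p\) and \(-p\) at a tent-type knot. The boundary points \(0\) and \(1\) need routine care, since components are only relatively open there, but the same argument applies.
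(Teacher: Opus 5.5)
Your proof is correct. The first half matches the paper's: you differentiate the functional equation three times on each linearity piece, then use injectivity of \(k\mapsto k^3\) at a knot where the slope changes to get \(f'''(\varphi(x_i))=0\). The second half takes a genuinely different route.

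The paper propagates this zero set backward along orbits, obtaining \(f'''=0\) on the closure \(\overline E\) of all points whose forward orbits meet a knot. On each component \(J\) of the complement, every iterate \(\varphi^{\circ\ell}\) is then affine with slope \(K_\ell\) satisfying \(|K_\ell|\,|J|\le1\). Iterating the cubic identity \(\ell\) times gives \(|f'''|\le|\beta|^\ell|J|^{-3}\|f'''\|_{L^\infty}\to0\) on \(J\).

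You work instead on the nonvanishing set \(U\) of \(f'''\), which is automatically forward invariant. The knot identity shows that \(\varphi\) is affine and injective on each component of \(U\), and maps it into a single component. The weight \(h=|f'''|^{1/3}\) transforms like a density under that affine map, so a single application of the identity gives \(\int_J h\le|\beta|^{1/3}\int_{J'}h\). Taking a component that maximizes \(\int_J h\), which exists because these integrals are summable, yields the contradiction. You need no iteration, no bookkeeping of knot preimages, and no separate treatment of affine \(\varphi\) or zero-slope pieces.

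Both arguments rest on the same fact: an affine map from an interval into \([0,1]\) cannot expand length without bound. The paper uses it in the sup norm through \(|K_\ell|\le|J|^{-1}\), and you use it in \(L^1\) through the change of variables. The paper's version is dynamical and gives an explicit pointwise decay estimate on each complementary interval. Yours is a shorter one-step extremal argument with less case analysis.
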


\begin{proof}
By the definition of \(\GPL([0,1])\) and 
Lemma~\ref{lem:notes-functional-equation}, there exist
\(\alpha\in\R\), \(|\beta|<1\), and
\(\varphi\in\PL([0,1])\) such that
\begin{equation}
 f-\alpha\varphi=\beta f\circ\varphi.
 \label{eq:notes-rigidity-fe}
\end{equation}
If \(\beta=0\), then \(f=\alpha\varphi\).  Since a piecewise linear
\(C^3\) function is affine, the conclusion follows.  If \(\varphi\) is
affine on \([0,1]\), each iterate of \(\varphi\) is affine; the uniformly
convergent series defining \(f\) is therefore affine as well.  We may thus
assume that \(\beta\neq0\) and that \(\varphi\) has at least one interior
knot.

We show that the third derivative of \(f\) vanishes, first at the knots and their
preimages, and then on the intervals that avoid them. Let \(m\ge 2\) and
\[
 0=x_0<x_1<\cdots<x_m=1
\]
be a minimal partition on which
\(\varphi(x)=k_i x+c_i\) for
\(x\in[x_{i-1},x_i]\), and let
\(\mathcal K=\{x_1,\ldots,x_{m-1}\}\) be the set of interior knots.  Put \(u=f'''\).  On every open
linearity interval, differentiating \eqref{eq:notes-rigidity-fe} three
times gives
\begin{equation}
 u(x)=\beta k_i^3u(\varphi(x)).
 \label{eq:notes-u-equation}
\end{equation}
At a knot \(\xi=x_i\), continuity of \(u\) and the left and right limits
in \eqref{eq:notes-u-equation} imply
\[
 u(\xi)=\beta k_i^3u(\varphi(\xi))
       =\beta k_{i+1}^3u(\varphi(\xi)).
\]
The partition is minimal, so \(k_i\neq k_{i+1}\).  It follows that
\(u(\varphi(\xi))=u(\xi)=0\).  Hence \(u=0\) on \(\mathcal K\).

Consider the set of points in $(0,1)$ whose forward orbits meet
the knot set $\mathcal K$,
\[
E=\bigcup_{\ell=0}^{\infty}
\bigl\{x\in(0,1):\varphi^{\circ\ell}(x)\in\mathcal K\bigr\}.
\]
Iterating \eqref{eq:notes-u-equation} along an orbit up to its
first visit to $\mathcal K$ shows that \(u=0\) on \(E\).
By continuity, $u=0$ on $\overline E$ as well.

The set $E$ need not be dense in \([0,1]\). To handle the remaining
points, let $J$ be any connected component of
$(0,1)\setminus\overline E$. Since this complement is open,
$J$ is a nonempty open interval, so $|J|>0$.
For every $l\geq0$, the image $\varphi^{\circ l}(J)$ is an
interval disjoint from $\mathcal K$; otherwise, some point
of $J$ would belong to $E$. Thus $\varphi$ is affine on each
$\varphi^{\circ l}(J)$, and induction shows that every iterate
of $\varphi$ is affine on $J$. In particular, for each
$\ell\geq1$, there are constants $K_\ell,C_\ell$ such that
\[
\varphi^{\circ\ell}(x)=K_\ell x+C_\ell,
\qquad x\in J.
\]
Since $\varphi^{\circ\ell}(J)\subseteq[0,1]$, taking diameters
gives $|K_\ell|\,|J|\leq1$.
Iterating \eqref{eq:notes-rigidity-fe} $\ell$ times and
differentiating the resulting identity three times on $J$,
we obtain
\begin{align*}
|u(x)|
&=|\beta|^\ell |K_\ell|^3
  |u(K_\ell x+C_\ell)|\\
&\leq |\beta|^\ell |K_\ell|^3
  \norm{u}_{L^\infty([0,1])}\\
&\leq |\beta|^\ell |J|^{-3}
  \norm{u}_{L^\infty([0,1])},
\qquad x\in J.
\end{align*}
Since $|\beta|<1$, letting $\ell\to\infty$ yields $u=0$ on $J$.
As $J$ is arbitrary and $u=0$ on $\overline E$, we conclude
that $u=0$ on $(0,1)$, and hence on $[0,1]$ by continuity.
Thus $f'''=0$ on $[0,1]$, so $f$ is a polynomial of degree
at most two.
\end{proof}

\begin{remark}
In \cite{He}, the generator \(s\) is fixed while the expansion coefficients
are arbitrary.  Here the coefficients are geometrically constrained,
whereas the continuous piecewise linear self-map \(\varphi\) may vary.  This
distinction separates the result in \cite{He} from the rigidity theorem
above.
\end{remark}

\begin{corollary}
    \label{cor:smooth-rigidity}
Let \(I=[a,b]\), with \(a<b\).  If
\[
 f\in\GPL(I)\cap C^3(I),
\]
then \(f\) is a polynomial of degree at most two.
\end{corollary}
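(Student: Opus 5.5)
The plan is to reduce Corollary~\ref{cor:smooth-rigidity} to Theorem~\ref{thm:notes-smooth-rigidity} through the affine normalization of Lemma~\ref{lem:notes-affine-normalization}.

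Take \(f\in\GPL(I)\cap C^3(I)\), and fix a representation with parameters \(\alpha\in\R\), \(|\beta|<1\) and \(\varphi\in\PL(I)\). Set \(t(y)=(y-a)/(b-a)\) and \(\chi=t\circ\varphi\circ t^{-1}\). Lemma~\ref{lem:notes-affine-normalization} then shows that \(\chi\in\Self([0,1])\) and that
\[
 \widetilde f(x)=\frac{f(t^{-1}(x))-\alpha a/(1-\beta)}{b-a}
\]
equals \(\alpha\sum_{\ell\ge1}\beta^{\ell-1}\chi^{\circ\ell}\) on \([0,1]\).

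The one point to check is that \(\chi\in\PL([0,1])\), so that \(\widetilde f\in\GPL([0,1])\). Continuity is clear, since \(\chi\) is a composition of continuous maps. If \(\varphi\) is affine on each \([x_{i-1},x_i]\), then \(\chi\) is affine on each \([t(x_{i-1}),t(x_i)]\), because composing affine maps gives an affine map. These images form a finite partition of \([0,1]\), so \(\chi\) is piecewise linear with finitely many pieces. The lemma also records that finite piecewise linearity is preserved, but the direct check is immediate.

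Next I verify \(\widetilde f\in C^3([0,1])\). This holds because \(\widetilde f\) is \(f\) composed with the affine map \(t^{-1}(x)=a+(b-a)x\), followed by an affine change of the output value. Theorem~\ref{thm:notes-smooth-rigidity} now applies and shows that \(\widetilde f\) is a polynomial of degree at most two.

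Finally I invert the normalization:
\[
 f(y)=(b-a)\,\widetilde f(t(y))+\frac{\alpha a}{1-\beta}.
\]
Substituting the affine function \(t\) into a polynomial of degree at most two, then scaling and adding a constant, again gives a polynomial of degree at most two. This is the claim. I do not expect a real obstacle; the only point needing care is the verification above that the conjugated generator stays in \(\PL([0,1])\).
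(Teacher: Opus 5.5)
Your proposal is correct and follows the paper's own proof: conjugate by the affine map of Lemma~\ref{lem:notes-affine-normalization}, observe that \(\chi\in\PL([0,1])\) and \(\widetilde f\in C^3([0,1])\), apply Theorem~\ref{thm:notes-smooth-rigidity}, and invert the normalization via \(f(y)=(b-a)\widetilde f(t(y))+\alpha a/(1-\beta)\). Your explicit check that the partition points \(t(x_i)\) make \(\chi\) piecewise affine is a useful detail that the paper leaves to the lemma's remark on preserved regularity.
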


\begin{proof}
Choose a representation
\[
 f=\alpha\sum_{\ell=1}^{\infty}
 \beta^{\ell-1}\varphi^{\circ\ell},
 \qquad \varphi\in\PL(I).
\]
With \(t\), \(\chi\), and \(\widetilde f\) as in
Lemma~\ref{lem:notes-affine-normalization}, affine conjugation gives
\(\chi\in\PL([0,1])\), while
\(\widetilde f\in C^3([0,1])\).  Thus
\(\widetilde f\in\GPL([0,1])\cap C^3([0,1])\), and
Theorem~\ref{thm:notes-smooth-rigidity} shows that \(\widetilde f\) is a
polynomial of degree at most two.  Finally,
\[
 f(y)=(b-a)\widetilde f(t(y))+\frac{\alpha a}{1-\beta},
 \qquad y\in I,
\]
so \(f\) also has degree at most two.
\end{proof}

We next quantify the truncation rate for the non-affine quadratic
outputs allowed by Theorem~\ref{thm:notes-smooth-rigidity}.
A lower bound on \(\beta\) alone would not exclude faster convergence,
since \(f\circ\varphi^{\circ L}\) could still decay. The next result
shows both that \(\beta\geq1/4\) and that the uniform norm of this
iterated target stays bounded below by a positive constant.
Together, these bounds determine the actual uniform truncation rate
for each fixed representation.

\begin{samepage}
\begin{theorem}[Optimal geometric factor and sharp truncation rate]
\label{thm:notes-beta-optimal}
Let
\[
 f(x)=c_2x^2+c_1x+c_0,
 \qquad c_2\neq0.
\]
Suppose that, for some \(\alpha\in\R\), \(|\beta|<1\), and
\(\varphi\in\PL([0,1])\),
\[
 f(x)=\alpha\sum_{\ell=1}^{\infty}
 \beta^{\ell-1}\varphi^{\circ\ell}(x).
\]
Then
\begin{equation}
 \beta\geq\frac14.
 \label{eq:notes-beta-lower-bound}
\end{equation}
Moreover, \(\varphi\) has a fixed point \(p\in(0,1]\) with
\(f(p)\neq0\). Hence, with \(c_f:=|f(p)|>0\),
\[
 \left\|f\circ\varphi^{\circ L}\right\|_{L^\infty([0,1])}
 \geq c_f,\qquad L\geq1.
\]
Consequently, the uniform truncation error satisfies
\begin{equation}
 c_f\beta^L
 \leq
 \left\|f-\alpha\sum_{\ell=1}^L
       \beta^{\ell-1}\varphi^{\circ\ell}\right\|_{L^\infty([0,1])}
 \leq\|f\|_{L^\infty([0,1])}\beta^L,
 \qquad L\geq1.
 \label{eq:notes-pl-actual-rate}
\end{equation}
Thus, for each fixed representation, the uniform truncation error
is of order \(\beta^L\). The iterated target
\(f\circ\varphi^{\circ L}\) provides no additional uniform decay.
Since \(\beta\geq1/4\), the truncation error cannot decay faster
than a positive multiple of \(4^{-L}\).
The constant \(c_f\) may depend on the fixed representation.
\end{theorem}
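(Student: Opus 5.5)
The plan is to extract rigid constraints on the slopes of \(\varphi\) by differentiating the one-step functional equation of Lemma~\ref{lem:notes-functional-equation},
\[
 f-\alpha\varphi=\beta f\circ\varphi .
\]
We use minimal knots \(0=x_0<\cdots<x_m=1\) and slopes \(k_i\) as in the proof of Theorem~\ref{thm:notes-smooth-rigidity}.

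\textbf{Step 1: slope magnitudes.} On each open linearity interval, differentiating twice gives \(2c_2=\beta k_i^2\,2c_2\). Since \(c_2\neq0\), this forces \(\beta k_i^2=1\) for every \(i\). Hence \(\beta>0\), and every slope has magnitude \(s:=\beta^{-1/2}>1\). So \(\varphi\) is piecewise linear with slopes \(\pm s\). If \(\varphi\) were affine, all its iterates would be affine, and so would \(f\); this contradicts \(c_2\neq0\). Therefore \(\varphi\) has at least one interior knot \(\xi\).

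\textbf{Step 2: at most one knot.} By minimality, the slopes at \(\xi\) are \(s\) on one side and \(-s\) on the other. Taking one-sided limits in the once-differentiated equation \(f'(x)-\alpha k_i=\beta k_i f'(\varphi(x))\) gives two identities,
\[
 f'(\xi)-\alpha s=\beta s f'(\varphi(\xi)),\qquad f'(\xi)+\alpha s=-\beta s f'(\varphi(\xi)).
\]
Adding them yields \(f'(\xi)=0\), so \(\xi=-c_1/(2c_2)\). Thus there is exactly one interior knot \(\xi_0\in(0,1)\).

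\textbf{Step 3: the bound on \(\beta\).} The pieces \([0,\xi_0]\) and \([\xi_0,1]\) are mapped affinely with slope magnitude \(s\) into \([0,1]\). Their images therefore have lengths \(s\xi_0\le1\) and \(s(1-\xi_0)\le1\). Adding these gives \(s\le2\), i.e. \(\beta=s^{-2}\ge1/4\), which is \eqref{eq:notes-beta-lower-bound}.

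\textbf{Step 4: a fixed point \(p\in(0,1]\) with \(f(p)\neq0\).} This is the step needing most care. Consider \(g(x)=\varphi(x)-x\), which is continuous with \(g(0)\ge0\) and \(g(1)\le0\), so \(\varphi\) has a fixed point.
\begin{itemize}
\item If some fixed point is positive, call it \(p\) and we are done.
\item Otherwise \(0\) is the only fixed point, so \(g\) has constant sign on \((0,1]\). Since \(g(1)\le0\) and \(g(1)\ne0\), we get \(g<0\) on \((0,1]\).
\end{itemize}
In the second case \(\varphi(0)=0\), and near \(0\) the map is linear with slope \(\pm s\). Slope \(+s\) with \(s>1\) gives \(\varphi(x)>x\) for small \(x>0\), contradicting \(g<0\). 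Slope \(-s\) gives \(\varphi(x)<0\), contradicting that \(\varphi\) is a self-map. So a fixed point \(p\in(0,1]\) exists.

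Evaluating the functional equation at \(p\) gives \((1-\beta)f(p)=\alpha p\). Also \(\alpha\neq0\), since otherwise the series gives \(f\equiv0\). Because \(p>0\) and \(\beta<1\), it follows that \(f(p)\neq0\).

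\textbf{Step 5: the truncation rates.} Since \(\varphi^{\circ L}(p)=p\), we have \(\|f\circ\varphi^{\circ L}\|_{L^\infty}\ge|f(p)|=c_f\) for every \(L\ge1\). The exact remainder \eqref{eq:notes-exact-remainder}, with \(|\beta|=\beta\), then gives the lower bound in \eqref{eq:notes-pl-actual-rate}. The upper bound is \eqref{eq:notes-remainder-bound}. The remaining assertions follow from \(\beta\ge1/4\).
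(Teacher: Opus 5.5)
Your proof is correct. Its outline matches the paper's: determine the shape of \(\varphi\), bound \(\beta\) by a length argument, find a fixed point in \((0,1]\), then read both bounds off the exact remainder. Steps 3--5 differ from the paper only cosmetically. You add the two length inequalities where the paper uses a piece of length at least \(1/2\), and you split on whether a positive fixed point exists where the paper splits on \(\varphi(0)>0\) versus \(\varphi(0)=0\).

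The genuine difference is how you obtain the structure of \(\varphi\). The paper treats the functional equation as a quadratic equation in \(\varphi(x)\) and completes the square to get \((2\beta\varphi+\alpha+\beta c_1)^2=D(x)\). The polynomial identity theorem on a single linearity interval then forces \(D=q^2\) with \(q\) affine. Hence \(2\beta\varphi+\alpha+\beta c_1=\pm q\), and continuity allows a sign change only at the zero of \(q\). You instead differentiate: twice on each open piece to get \(\beta k_i^2=1\), and once with one-sided limits at a knot to force \(f'(\xi)=0\).

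Your route is local and reuses the differentiation technique from the proof of the smooth rigidity theorem. It excludes \(\beta\leq0\) automatically, whereas the paper treats \(\beta=0\) separately. It also identifies the unique knot directly as the vertex \(-c_1/(2c_2)\) of the parabola. The paper's route is global and algebraic. It gives an explicit formula for \(\varphi\) as one of two pointwise-determined branches. It also exposes the compatibility requirement that \(D\) be a perfect square, which constrains \(\alpha\) and \(\beta\) jointly with the coefficients of \(f\). So the paper's argument essentially characterizes all such representations, while yours extracts exactly the slope and knot information needed for the theorem.
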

\end{samepage}

\begin{proof}
By Lemma~\ref{lem:notes-scaling}, we may divide by \(c_2\) and assume
\(f(x)=x^2+c_1x+c_0\).  The case \(\beta=0\) would imply
\(f=\alpha\varphi\), which is impossible for a non-affine quadratic.
The functional equation becomes
\[
 \beta\varphi(x)^2+(\alpha+\beta c_1)\varphi(x)
 +\beta c_0-f(x)=0.
\]
Equivalently,
\begin{equation}
 \bigl(2\beta\varphi(x)+\alpha+\beta c_1\bigr)^2=D(x),
 \label{eq:notes-discriminant-identity}
\end{equation}
where
\[
 D(x)=(\alpha+\beta c_1)^2
      -4\beta\bigl(\beta c_0-f(x)\bigr).
\]
On any nondegenerate linearity interval of \(\varphi\), the left-hand side
of \eqref{eq:notes-discriminant-identity} is the square of an affine
function.  Since the equality holds on a nondegenerate interval, the
polynomial identity theorem implies that the global quadratic polynomial
\(D\) is a perfect square.  Its leading
coefficient is \(4\beta\), so necessarily \(\beta>0\), and
\[
 D(x)=q(x)^2
\]
for an affine function \(q\) whose slope has absolute value
\(2\sqrt\beta\).

Equation \eqref{eq:notes-discriminant-identity} gives
\(2\beta\varphi(x)+\alpha+\beta c_1=\pm q(x)\).  Continuity permits a
change of sign only at the at most one zero of the affine function \(q\).
Hence \(\varphi\) has at most two affine pieces, and the absolute value of
its slope on each piece is \(\beta^{-1/2}\).  At least one of the
pieces has length at least \(1/2\).  Since \(\varphi([0,1])\subseteq[0,1]\),
the variation of \(\varphi\) over that piece cannot exceed \(1\).  Therefore
\[
 \frac12\beta^{-1/2}\leq1,
\]
which is equivalent to \eqref{eq:notes-beta-lower-bound}.

For the remainder estimate, return to the original \(f\) and \(\alpha\);
the conclusions about \(\varphi\) and \(\beta\) are unchanged. Write
\(s=\beta^{-1/2}>1\). The self-map \(\varphi\) has a fixed point
\(p\in(0,1]\). Indeed, if \(\varphi(0)>0\), this follows by applying
the intermediate value theorem to \(\varphi(x)-x\). If
\(\varphi(0)=0\), the first affine piece must have slope \(s\), since
\(\varphi\) is nonnegative. Thus \(\varphi(x)>x\) for small positive
\(x\), while \(\varphi(1)\leq1\), giving the same conclusion.
Also \(\alpha\ne0\); otherwise the functional equation and
\(|\beta|<1\) would imply \(f=0\). Evaluating that equation at \(p\)
gives
\[
 f(p)=\frac{\alpha p}{1-\beta}\ne0.
\]
Set \(c_f=|f(p)|=|\alpha|p/(1-\beta)>0\).
Since \(\varphi^{\circ L}(p)=p\), we have
\(\|f\circ\varphi^{\circ L}\|_{L^\infty([0,1])}\geq c_f\)
for every \(L\geq1\). The self-map property also gives the upper
bound \(\|f\circ\varphi^{\circ L}\|_{L^\infty([0,1])}
\leq\|f\|_{L^\infty([0,1])}\).
Multiplying these bounds by \(\beta^L\) and using the exact remainder
\eqref{eq:notes-exact-remainder} proves \eqref{eq:notes-pl-actual-rate}.
\end{proof}

\begin{remark}
Theorems~\ref{thm:notes-smooth-rigidity} and~\ref{thm:notes-beta-optimal}
concern the piecewise linear subclass \(\GPL([0,1])\) in
\eqref{eq:notes-pl-subclass}. They impose no such restrictions on the
full class \(\G([0,1])\) or on polynomial approximation by general
ReLU networks. Within this subclass, the classical tent map shows that
both the geometric-factor bound and the truncation-rate bound are sharp:
for \(F(x)=x-x^2\), surjectivity of every iterate gives the exact norm
error \(4^{-L}\|F\|_{L^\infty([0,1])}=4^{-L-1}\).
\end{remark}

Theorem~\ref{thm:notes-beta-optimal} rules out additional uniform decay
from \(f\circ\varphi^{\circ L}\) for these piecewise linear
representations. The next section constructs a generator whose iterates
approach a zero of the target, so this composed target tends uniformly
to zero and accelerates the truncation beyond the geometric weights.

% ============================================ %

%% file: sections/generator.tex
\section{A differentiable generator with quadratically convergent iterates}
\label{sec:Overcome}

We now seek additional uniform decay in the iterated target
\(f\circ\varphi^{\circ L}\) within the exact remainder
\eqref{eq:notes-exact-remainder}. For \(f(x)=x^2\) on \([-1,1]\), we construct
a smooth self-map satisfying \(0\leq\varphi(x)\leq x^2/\alpha\),
with \(\alpha>1\). Repeated composition then drives the target values
in the remainder to zero at a doubly exponential rate, while the
coefficients remain geometrically weighted. The generator is realized
using one fixed activation, with the two parameters entering only
through affine coefficients.

\subsection{Existence of a generator}

For a prescribed target \(f\), we first seek a continuous self-map
\(\varphi\) satisfying
\begin{equation}
 f(x)-\alpha\varphi(x)=\beta f(\varphi(x)).
 \label{eq:notes-decoupled-fe}
\end{equation}
We work on \(I=[-1,1]\) to accommodate targets of either sign.
The following proposition shows that, for every \(0<\beta<1\),
such a generator exists whenever \(f\) is Lipschitz and
\(\alpha\) is sufficiently large.

\begin{proposition}[Existence of a self-map solution]
\label{thm:notes-fixed-point}
Let \(I=[-1,1]\), and let \(f:I\to\mathbb R\) be Lipschitz with
constant \(L_f\).  If \(0<\beta<1\) and
\begin{equation}
 \alpha>\max\{\beta L_f,(1+\beta)\|f\|_{L^\infty(I)}\},
 \label{eq:notes-alpha-condition}
\end{equation}
then \eqref{eq:notes-decoupled-fe} has a unique continuous self-map
solution.  This solution satisfies
\[
 f=\alpha\sum_{\ell=1}^\infty\beta^{\ell-1}\varphi^{\circ \ell},
 \qquad
 \left\|f-\alpha\sum_{\ell=1}^L\beta^{\ell-1}\varphi^{\circ \ell}\right\|_{L^\infty(I)}
 \leq\beta^L\|f\circ \varphi^{\circ L}\|_{L^\infty(I)}.
\]
\end{proposition}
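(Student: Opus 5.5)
The plan is to solve the functional equation \eqref{eq:notes-decoupled-fe} pointwise in the unknown value \(y=\varphi(x)\), with no fixed-point iteration in function space. Rewrite \eqref{eq:notes-decoupled-fe} as
\[
 g(\varphi(x))=f(x),\qquad g(y):=\alpha y+\beta f(y),\quad y\in I.
\]
So the whole problem reduces to showing that \(g\) is a homeomorphism from \(I\) onto an interval that contains \(f(I)\). Then \(\varphi:=g^{-1}\circ f\) is forced.

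\emph{Step 1: \(g\) is strictly increasing.} For \(-1\le y_1<y_2\le1\), the Lipschitz bound gives
\[
 g(y_2)-g(y_1)\ge \alpha(y_2-y_1)-\beta L_f(y_2-y_1)=(\alpha-\beta L_f)(y_2-y_1)>0,
\]
using the first part of \eqref{eq:notes-alpha-condition}. Hence \(g\) is continuous and strictly increasing. Its inverse on \(g(I)\) is therefore continuous, and in fact Lipschitz with constant \((\alpha-\beta L_f)^{-1}\).

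\emph{Step 2: the range of \(g\) covers \(f(I)\), with values landing in \(I\).} Write \(M=\|f\|_{L^\infty(I)}\). The second part of \eqref{eq:notes-alpha-condition} gives
\[
 g(-1)\le-\alpha+\beta M<-M,\qquad g(1)\ge\alpha-\beta M>M.
\]
So \(f(I)\subseteq[-M,M]\subset(g(-1),g(1))=g(I)^\circ\). By the intermediate value theorem, for each \(x\) there is exactly one \(y\in(-1,1)\) with \(g(y)=f(x)\). Defining \(\varphi(x)\) as this \(y\) yields a self-map of \(I\). It is continuous as the composition of the continuous maps \(f\) and \(g^{-1}\), and it satisfies \eqref{eq:notes-decoupled-fe} by construction.

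\emph{Step 3: uniqueness.} Suppose \(\tilde\varphi\) is any self-map of \(I\) solving \eqref{eq:notes-decoupled-fe}. Then \(g(\tilde\varphi(x))=f(x)=g(\varphi(x))\) for every \(x\). Injectivity of \(g\) on \(I\) then gives \(\tilde\varphi=\varphi\). This is uniqueness even among all, not necessarily continuous, self-maps.

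\emph{Step 4: the series and the remainder bound.} Since \(f\) is Lipschitz on \(I\), it is bounded, so \(f\in\Bdd(I)\), and \(\varphi\in\Self(I)\). Lemma~\ref{lem:notes-functional-equation} then converts \eqref{eq:notes-decoupled-fe} into the geometric representation \eqref{eq:notes-geometric-representation}. The same lemma, through \eqref{eq:notes-remainder-bound} with \(|\beta|=\beta\), gives the stated truncation estimate.

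I expect no real obstacle. The only place needing care is Step 2: the strict endpoint inequalities must place every value \(f(x)\) strictly inside \(g(I)\), which is exactly what the factor \((1+\beta)\) in \eqref{eq:notes-alpha-condition} provides.
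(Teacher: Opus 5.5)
Your proof is correct, and it takes a different route from the paper. The paper applies Banach's fixed-point theorem to \(Tu=\alpha^{-1}(f-\beta f\circ u)\) on the complete metric space \(C(I,I)\), which is the closed unit ball of \(C(I,\mathbb R)\). The condition \(\alpha>(1+\beta)\|f\|_{L^\infty(I)}\) makes \(T\) map \(C(I,I)\) into itself. The condition \(\alpha>\beta L_f\) makes \(T\) a contraction with constant \(\beta L_f/\alpha\). Lemma~\ref{lem:notes-functional-equation} then yields the series and the remainder bound exactly as in your Step~4. Because \(T\) acts pointwise, that fixed-point problem splits into the scalar equations \(g(y)=f(x)\). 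You solve these by strict monotonicity and the intermediate value theorem instead of by contraction, and the two hypotheses play the same roles in both proofs. Your version is more elementary and gives more information: the explicit formula \(\varphi=g^{-1}\circ f\), uniqueness among all self-maps of \(I\) rather than only continuous ones, the inclusion \(\varphi(I)\subseteq(-1,1)\), and the Lipschitz bound \(L_f/(\alpha-\beta L_f)\) for \(\varphi\). The paper's contraction argument instead supplies a Picard iteration \(\varphi_{k+1}=T\varphi_k\) that converges uniformly at the geometric rate \((\beta L_f/\alpha)^k\). It also does not depend on the order structure of the real line.
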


\begin{proof}
Let \(C(I,I):=\{u\in C(I,\mathbb R):u(I)\subseteq I\}\), equipped
with the uniform metric
\(d_\infty(u,v)=\|u-v\|_{L^\infty(I)}\).
Since \(I=[-1,1]\), this set is the closed unit ball of the Banach space
\(C(I,\mathbb R)\) under the uniform norm, and hence is a complete metric
space. Define \(Tu=\alpha^{-1}(f-\beta f\circ u)\). Condition \eqref{eq:notes-alpha-condition} gives \(\|Tu\|_{L^\infty(I)}<1\) and \(\|Tu-Tv\|_{L^\infty(I)}\leq(\beta L_f/\alpha)\|u-v\|_{L^\infty(I)}\).
Banach's fixed-point theorem gives the unique solution \(\varphi\). The representation
and error estimate follow from Lemma~\ref{lem:notes-functional-equation}
and \eqref{eq:notes-exact-remainder}.
\end{proof}

The generator in Proposition~\ref{thm:notes-fixed-point} depends on the
target \(f\), so this existence result alone does not provide a network
with a prescribed activation. For the square
function, however, the generator is explicit and can be realized using
a single fixed activation for every admissible parameter pair
\((\alpha,\beta)\). The resulting square approximation will be used to
approximate multiplication and, in turn, to construct polynomial
approximants in Section~\ref{sec:Approx}.

\subsection{Square approximation with a fixed activation}

Throughout the remaining sections, fix
\begin{equation}
 0<\beta<1,\qquad \alpha>\max\{2\beta,1+\beta\}.
 \label{eq:notes-square-alpha}
\end{equation}
The nonnegative solution of \eqref{eq:notes-decoupled-fe} for \(f(x)=x^2\)
is
\begin{equation}
 \varphi(x)=\frac{\sqrt{\alpha^2+4\beta x^2}-\alpha}{2\beta}
           =\frac{2x^2}{\sqrt{\alpha^2+4\beta x^2}+\alpha}.
 \label{eq:notes-explicit-varphi}
\end{equation}
It is smooth, even, nonnegative, and satisfies
\begin{equation}
 x^2=\alpha\varphi(x)+\beta\varphi(x)^2,
 \qquad 0\leq\varphi(x)\leq\frac{x^2}{\alpha}.
 \label{eq:notes-square-fe}
\end{equation}
In particular, \(\varphi([-1,1])\subseteq[0,1/\alpha]\subset[0,1]\),
so \(\varphi\) also preserves \([0,1]\). And we will use the quadratic bound in \eqref{eq:notes-square-fe}
to estimate the iterates of \(\varphi\) and the resulting
truncation error.

Define
\begin{equation}
 \psi(t)=
 \begin{cases}
 \sqrt{1+t^2}-1,&t\geq0,\\
 0,&t<0,
 \end{cases}
 \qquad
 a=\frac{\alpha}{2\beta},\qquad b=\frac{2\sqrt\beta}{\alpha}.
 \label{eq:notes-smooth-activation}
\end{equation}
The activation \(\psi\) is nonpolynomial, belongs to \(C^1(\mathbb R)\),
and is globally \(1\)-Lipschitz.  Indeed, its derivative is zero on the
negative half-line and equals \(t/\sqrt{1+t^2}\) for \(t>0\), with common
limit zero at the origin.  The generator has the realization
\begin{equation}
 \varphi(x)=a\bigl[\psi(bx)+\psi(-bx)\bigr].
 \label{eq:notes-varphi-from-sigma}
\end{equation}
This realization uses a single fixed activation \(\psi\), with
\(\alpha\) and \(\beta\) entering only through the affine coefficients.
For every integer \(\ell\geq1\), its iterates satisfy
\[
 \varphi^{\circ\ell}(x)
 =\bigl(a\psi(b\,\cdot)\bigr)^{\circ\ell}(x)
  +\bigl(a\psi(b\,\cdot)\bigr)^{\circ\ell}(-x),
 \qquad x\in[-1,1].
\]

For \(L\geq1\), consider the partial sums
\begin{equation}
 Q_L(x):=\alpha\sum_{\ell=1}^L\beta^{\ell-1}\varphi^{\circ\ell}(x).
 \label{eq:notes-square-network}
\end{equation}
Figure~\ref{fig:square-approximation} illustrates the approximation for
\((\alpha,\beta)=(2,1/2)\). The error decreases with \(L\) and
vanishes more rapidly near the origin. We next describe the networks
that realize these partial sums and specify their resource counts.

\begin{figure}[H]
 \centering
 \input{figures/square-approximation-plot}
 \caption{Square approximation with \((\alpha,\beta)=(2,1/2)\).
 Left: \(x^2\) and the partial sums \(Q_1,Q_2\) on \([-1,1]\).
 Right: the errors \(x^2-Q_L(x)\) for \(L=1,2,3\) on
 \(0.05\leq x\leq1\), with a logarithmic vertical axis.
 The errors are even and vanish at \(x=0\).}
 \label{fig:square-approximation}
\end{figure}

\begin{definition}[Compositional Neural Networks]
\label{def:notes-network-model}
A \(\psi\)-compositional network is a feedforward network with
alternating nonlinear and linear hidden layers, assembled from smaller
networks called \emph{modules}. Skip connections allow inputs and earlier
neuron outputs to be reused.

For an input \(\boldsymbol x\in\mathbb R^d\), set
\(\boldsymbol h_0=\boldsymbol x\). With \(H\) hidden layers, write
\[
 \begin{aligned}
 \boldsymbol h_\ell
 &=\rho_\ell(A_\ell\boldsymbol u_\ell+\boldsymbol b_\ell),
       \qquad 1\leq\ell\leq H,\\
 \mathcal N(\boldsymbol x)
 &=\boldsymbol a^{\mathsf T}\boldsymbol u_{\mathrm{out}}+c,
 \qquad
 \rho_\ell=
 \begin{cases}
  \psi,&\ell\text{ odd},\\
  \operatorname{Id},&\ell\text{ even}.
 \end{cases}
 \end{aligned}
\]
The activation acts coordinatewise. The vector \(\boldsymbol u_\ell\)
collects selected input coordinates and earlier neuron outputs, each
listed once; these selections are fixed by the network architecture.
The vector \(\boldsymbol u_{\mathrm{out}}\) is chosen in the same way.
For example, in a network consisting of a single module, the connections
after the first nonlinear--linear pair can take the form
\[
 \boldsymbol u_{2r-1}=\boldsymbol h_{2r-3},
 \qquad
 \boldsymbol u_{2r}=
 \begin{bmatrix}\boldsymbol h_{2r-1}\\\boldsymbol h_{2r-2}\end{bmatrix},
 \qquad r\geq2,
\]
whenever the indicated layers are present. The nonlinear layer reuses
the preceding nonlinear output, and the linear layer combines the new
nonlinear output with the preceding linear output.

The \emph{depth} is the number \(H\) of hidden layers. The \emph{width}
\(W\) counts all neurons in a hidden layer, including linear neurons;
the input and final output layers are excluded. The \emph{size} \(S\)
counts all nonzero weights and biases:
\[
 \begin{gathered}
 W=\max_{1\leq\ell\leq H}\dim\boldsymbol h_\ell,\\
 S=\sum_{\ell=1}^{H}
       \bigl(\operatorname{nnz}(A_\ell)
             +\operatorname{nnz}(\boldsymbol b_\ell)\bigr)
       +\operatorname{nnz}(\boldsymbol a)+\operatorname{nnz}(c).
 \end{gathered}
\]
Here \(\operatorname{nnz}\) counts nonzero entries, also for scalars;
set \(W=0\) when \(H=0\).
We say that the weights and biases are bounded by \(B>0\) if every
entry of \(A_\ell\), \(\boldsymbol b_\ell\), and \(\boldsymbol a\),
and the output bias \(c\), has absolute value at most \(B\).
\end{definition}

Each nonzero weight or bias is counted separately in \(S\), including
weights on skip connections. Reusing an existing output adds only the
weights on its new connections. When a square or multiplication module
is used inside a larger network, its output becomes a linear hidden neuron and is counted
in both depth and width. Stored outputs reused through skip connections
do not add neurons to later layers.

Connections between multiplication modules are prescribed by the
monomial factorization rule in Section~\ref{sec:Approx}; each module
receives the outputs of its two specified factors.
Figure~\ref{fig:network-model} shows the pattern within a square module
for \(Q_2\).

\begingroup
\setlength{\intextsep}{6pt}
\begin{figure}[H]
 \centering
 \includegraphics[width=0.90\linewidth,trim=10bp 0bp 10bp 0bp,clip]{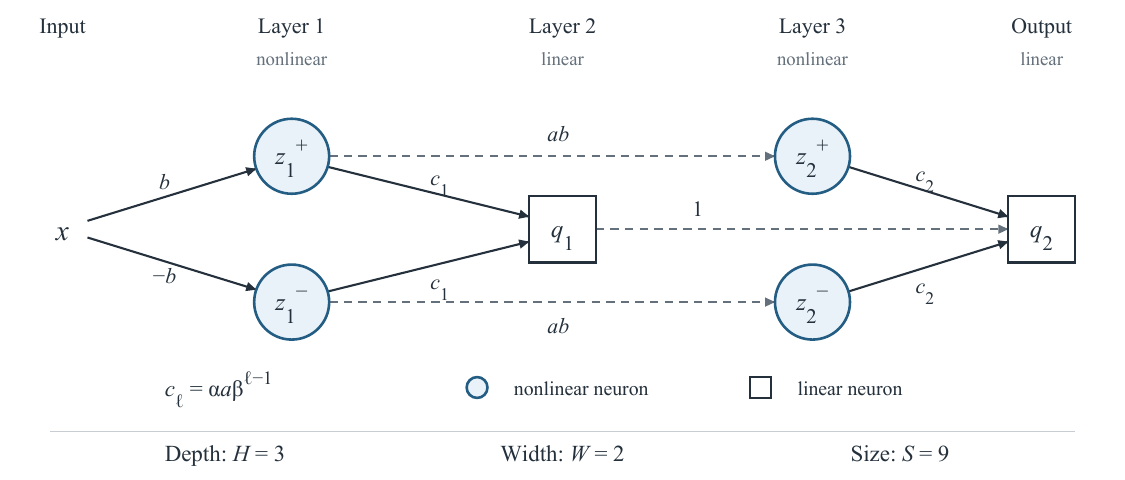}
 \caption{An alternating realization of \(Q_2\), with
 \(c_\ell=\alpha a\beta^{\ell-1}\).
 Circles denote nonlinear neurons and squares denote linear neurons.
 Dashed skip connections carry the branch values and the running sum
 across one intervening layer. All three hidden layers contribute to depth.}
 \label{fig:network-model}
\end{figure}
\endgroup

For general \(L\), the partial sum \(Q_L\) in
\eqref{eq:notes-square-network} has an alternating realization: a nonlinear layer updates the two generator
branches, and a linear layer adds their contribution to the running sum.
Here \(L\) counts generator iterations. Set
\[
 \varepsilon_L:=\beta^L\alpha^{-2(2^L-1)}.
\]
Starting from \(q_0=0\), define
\begin{equation}
 \begin{aligned}
 z_1^\pm&=\psi(\pm bx),\\
 z_{\ell+1}^\pm&=\psi(abz_\ell^\pm),
       &&1\leq\ell<L,\\
 q_\ell&=q_{\ell-1}
       +\alpha a\beta^{\ell-1}(z_\ell^++z_\ell^-),
       &&1\leq\ell\leq L.
 \end{aligned}
 \label{eq:notes-square-chains}
\end{equation}
For each input, all neuron outputs in one branch are zero. Induction gives
\(\varphi^{\circ\ell}(x)=a(z_\ell^++z_\ell^-)\) and hence
\(q_\ell=Q_\ell(x)\). Thus \(q_L\) is the final output, while
\(q_1,\ldots,q_{L-1}\) are linear hidden neurons.
The branches and the running sum each pass across one intervening layer
through skip connections.

\begin{theorem}[Double-exponential square approximation]
\label{thm:notes-square-network}
Under \eqref{eq:notes-square-alpha}, the network \(Q_L\) has depth
\(2L-1\), width \(2\), and size at most \(5L-1\). It has no
biases, and all its weights are bounded by
\begin{equation}
 B_{\alpha,\beta}:=\max\{1,b,ab,\alpha a\},
 \label{eq:notes-weight-bound}
\end{equation}
independently of \(L\).  Moreover,
\begin{equation}
 0\leq x^2-Q_L(x)
 =\beta^L\bigl(\varphi^{\circ L}(x)\bigr)^2
 \leq\varepsilon_L,
 \qquad |x|\leq1.
 \label{eq:notes-square-network-error}
\end{equation}
Consequently, uniform error \(\varepsilon\) is attained with depth and
size \(O(\log\log(1/\varepsilon))\) as \(\varepsilon\downarrow0\),
for the fixed activation \(\psi\) and fixed parameters \((\alpha,\beta)\).
\end{theorem}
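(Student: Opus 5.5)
The proof splits into an architecture count and an error estimate; the count is bookkeeping against \eqref{eq:notes-square-chains}, and the estimate comes from Lemma~\ref{lem:notes-functional-equation} together with the quadratic bound in \eqref{eq:notes-square-fe}.

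\emph{Architecture.} Read the layers off \eqref{eq:notes-square-chains}. The nonlinear layers are $1,3,\ldots,2L-1$, and layer $2\ell-1$ outputs the pair $(z_\ell^+,z_\ell^-)$. The linear layers are $2,4,\ldots,2L-2$, and layer $2\ell$ outputs $q_\ell$. Since $q_L$ is the network output, it is not a hidden layer, so the depth is $L+(L-1)=2L-1$. The widest layers are the nonlinear ones, so the width is $2$.

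For the size, count nonzero weights:
\begin{itemize}
\item layer $1$ carries the two weights $\pm b$;
\item each later nonlinear layer carries two weights $ab$, taken from the previous nonlinear layer by skip connection, giving $2(L-1)$;
\item layer $2$ computes $q_1=\alpha a(z_1^++z_1^-)$ with two weights;
\item each later linear layer $2\ell$, for $2\le\ell\le L-1$, has three weights, namely two of size $\alpha a\beta^{\ell-1}$ and a weight $1$ on $q_{\ell-1}$;
\item the output $q_L$ uses three more weights.
\end{itemize}
When $L\ge2$ the total is $2+2(L-1)+2+3(L-2)+3=5L-1$. When $L=1$ the total is $2+2=4=5L-1$. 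No biases appear. Every weight lies in $\{\pm b,\,ab,\,\alpha a\beta^{\ell-1},\,1\}$, and since $\beta<1$ each is bounded by $B_{\alpha,\beta}$. The chain identity $\varphi^{\circ\ell}(x)=a(z_\ell^++z_\ell^-)$ was already noted by induction from \eqref{eq:notes-varphi-from-sigma}, so $q_L=Q_L(x)$.

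\emph{Error identity.} By \eqref{eq:notes-square-fe}, $f(x)=x^2$ satisfies $f-\alpha\varphi=\beta f\circ\varphi$ on $[-1,1]$, and $\varphi$ is a self-map of $[-1,1]$. Lemma~\ref{lem:notes-functional-equation} then gives the exact remainder
\[
x^2-Q_L(x)=\beta^L\bigl(\varphi^{\circ L}(x)\bigr)^2\ge0.
\]

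\emph{Iterate bound (the main step).} The claim is
\[
0\le\varphi^{\circ \ell}(x)\le \alpha^{-(2^\ell-1)}|x|^{2^\ell}.
\]
For $\ell=1$ this is $\varphi(x)\le x^2/\alpha$. For the induction step, nonnegativity gives $\varphi^{\circ(\ell+1)}(x)\le(\varphi^{\circ\ell}(x))^2/\alpha$, so
\[
\varphi^{\circ(\ell+1)}(x)\le\alpha^{-1}\alpha^{-2(2^\ell-1)}|x|^{2^{\ell+1}}=\alpha^{-(2^{\ell+1}-1)}|x|^{2^{\ell+1}}.
\]
Squaring the bound for $\ell=L$ and multiplying by $\beta^L$ yields
\[
x^2-Q_L(x)\le\beta^L\alpha^{-2(2^L-1)}|x|^{2^{L+1}}\le\varepsilon_L \quad\text{for } |x|\le1.
\]

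\emph{Complexity.} Since $\alpha>1$, we have $\varepsilon_L\le\alpha^{-2(2^L-1)}$. Hence $\varepsilon_L\le\varepsilon$ once $2^L\gtrsim\log(1/\varepsilon)/\log\alpha$, that is, for $L=O(\log\log(1/\varepsilon))$. Depth and size are linear in $L$, which gives the stated $O(\log\log(1/\varepsilon))$ bound.

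The only step needing care is the iterate bound, where the induction relies on $\varphi\ge0$ and on the factor $1/\alpha$ compounding doubly exponentially. Everything else is counting against \eqref{eq:notes-square-chains}.
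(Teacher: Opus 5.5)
Your proposal is correct and follows the paper's proof essentially step by step. The ingredients match: the same layer-by-layer count (your $2L$ nonlinear weights plus $3L-1$ accumulator weights give $5L-1$), the exact remainder from Lemma~\ref{lem:notes-functional-equation} applied via \eqref{eq:notes-square-fe}, the doubling induction $\varphi^{\circ(\ell+1)}\le(\varphi^{\circ\ell})^2/\alpha$, and the depth choice $2^L\ge1+\log(1/\varepsilon)/(2\log\alpha)$. One small attribution slip: the bound $\varepsilon_L\le\alpha^{-2(2^L-1)}$ comes from $\beta^L\le1$, while $\alpha>1$ is what makes that bound tend to zero.
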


\begin{proof}
There are \(L\) nonlinear layers of width two and \(L-1\) linear
hidden layers of width one, followed by the scalar output \(q_L\).
Thus the depth is \(2L-1\) and the width is two. The nonlinear neurons
use \(2+2(L-1)=2L\) weights. The accumulators use \(2L\) weights on
the branch outputs and \(L-1\) unit weights on earlier sums, giving
size at most \(5L-1\). The zero initialization \(q_0\) needs no neuron
or connection. All coefficients are among
\(\pm b\), \(ab\), \(1\), and \(\alpha a\beta^{\ell-1}\), so the
weight bound \eqref{eq:notes-weight-bound} holds.

The equality in \eqref{eq:notes-square-network-error} follows from
\eqref{eq:notes-square-fe} and the exact remainder
\eqref{eq:notes-exact-remainder}.  Induction using
\eqref{eq:notes-square-fe} gives
\begin{equation}
 0\leq\varphi^{\circ L}(x)
 \leq\alpha^{-(2^L-1)}|x|^{2^L},\qquad |x|\leq1.
 \label{eq:notes-iterate-decay}
\end{equation}
This proves the upper bound.  For \(0<\varepsilon<1\), it is enough to
take
\[
 L=\max\left\{1,
 \left\lceil\log_2\left(1+\frac{\log(1/\varepsilon)}{2\log\alpha}\right)
 \right\rceil\right\},
\]
because \(\beta^L\leq1\).  The asserted depth and size estimates follow.
\end{proof}

Since \(\varphi\) also preserves \([0,1]\), the same error estimate holds
on the interval used for the piecewise linear comparison in
Section~\ref{sec:GWC}. We use \([-1,1]\) to accommodate the signed
arguments \((u\pm v)/2\) in the multiplication construction of
Section~\ref{sec:Approx}.

The improved rate comes from quadratic convergence to the fixed point
\(0\), not from varying \(\beta\) with depth.  For example, the fixed
choice \((\alpha,\beta)=(2,1/2)\) gives
\(\varepsilon_L=2^{-L-2(2^L-1)}\) and \(B_{\alpha,\beta}=4\).
Since \(\varphi\) is even and increasing on \([0,1]\), the worst-case
error is attained at \(x=\pm1\):
\[
 \|x^2-Q_L\|_{L^\infty([-1,1])}
 =\beta^L[\varphi^{\circ L}(1)]^2.
\]
Table~\ref{tab:square-errors} illustrates the difference between the
geometric bound and the bound obtained by tracking the iterates.
The weight bound depends on the fixed pair \((\alpha,\beta)\);
it is not uniform as \(\beta\downarrow0\).

\begin{table}[htbp]
 \centering
 \caption{Square approximation with fixed \((\alpha,\beta)=(2,1/2)\)
 and weights bounded by four. Values are rounded to five
 significant digits. The last column evaluates the exact remainder
 formula in high precision; it is not the error of a floating-point
 evaluation of the truncated sum.}
 \label{tab:square-errors}
 \begin{tabular}{c@{\qquad}c@{\qquad}c@{\qquad}c}
 \hline
 \(L\) & Geometric bound \(\beta^L\) &
 Bound \(\varepsilon_L\) & Exact norm error\\
 \hline
 1 & \(5.0000\times10^{-1}\) & \(1.2500\times10^{-1}\) & \(1.0102\times10^{-1}\)\\
 2 & \(2.5000\times10^{-1}\) & \(3.9063\times10^{-3}\) & \(2.4300\times10^{-3}\)\\
 3 & \(1.2500\times10^{-1}\) & \(7.6294\times10^{-6}\) & \(2.9453\times10^{-6}\)\\
 4 & \(6.2500\times10^{-2}\) & \(5.8208\times10^{-11}\) & \(8.6750\times10^{-12}\)\\
 5 & \(3.1250\times10^{-2}\) & \(6.7763\times10^{-21}\) & \(1.5051\times10^{-22}\)\\
 6 & \(1.5625\times10^{-2}\) & \(1.8367\times10^{-40}\) & \(9.0616\times10^{-44}\)\\
 \hline
 \end{tabular}
\end{table}

The next section uses the explicit square modules \(Q_n\). The square approximation error satisfies a sharper bound for smaller
inputs, with the dependence on input magnitude becoming stronger
as the number of iterations increases. The associated multiplication modules produce outputs no larger in magnitude than the exact products.
Together, these properties allow depth to be allocated according
to factor size in polynomial and analytic approximation.

%% file: figures/square-approximation-plot.tex
\providecommand{\squarefiguredatadir}{figures}
\begin{tikzpicture}
 \pgfplotsset{square approximation axis/.style={
  scale only axis,
  width=0.365\textwidth,
  height=0.20\textwidth,
  xlabel={\(x\)},
  tick label style={font=\footnotesize},
  label style={font=\footnotesize},
  tick align=outside,
  tick style={black!70},
  legend style={font=\footnotesize,at={(0.5,1.04)},anchor=south,
                legend columns=3,draw=none,fill=none,
                inner xsep=0pt,inner ysep=1pt,
                column sep=1em,row sep=2pt,
                cells={anchor=west}},
  legend image post style={xscale=1.25},
  grid=major,
  grid style={gray!25,line width=0.35pt},
  axis line style={black!70},
  line width=1pt}}
 \begin{axis}[
  square approximation axis,
  at={(0,0)},anchor=south west,
  xmin=-1,xmax=1,
  ymin=0,ymax=1.05,
  ylabel={function value},
  xtick={-1,-0.5,0,0.5,1},
  ytick={0,0.25,0.5,0.75,1},
  yticklabels={\(0\),\(0.25\),\(0.50\),\(0.75\),\(1\)}]
  \addplot[black,line width=1.4pt]
   table[x=x,y=target]{\squarefiguredatadir/square-approximation.dat};
  \addlegendentry{\(x^2\)}
  \addplot[red!75!black,densely dashed]
   table[x=x,y=Q1]{\squarefiguredatadir/square-approximation.dat};
  \addlegendentry{\(L=1\)}
  \addplot[blue!75!black,
           dash pattern=on 4.5pt off 1.8pt on 0.8pt off 1.8pt]
   table[x=x,y=Q2]{\squarefiguredatadir/square-approximation.dat};
  \addlegendentry{\(L=2\)}
 \end{axis}
 \begin{axis}[
  square approximation axis,
  at={(0.50\textwidth,0)},anchor=south west,
  xmin=0.05,xmax=1,
  ymode=log,log basis y=10,
  ymin=1e-27,ymax=1,
  ylabel={absolute error},
  xtick={0.2,0.4,0.6,0.8,1},
  ytick={1,1e-5,1e-10,1e-15,1e-20,1e-25}]
  \addplot[red!75!black,densely dashed]
   table[x=x,y=e1]{\squarefiguredatadir/square-errors.dat};
  \addlegendentry{\(L=1\)}
  \addplot[blue!75!black,
           dash pattern=on 4.5pt off 1.8pt on 0.8pt off 1.8pt]
   table[x=x,y=e2]{\squarefiguredatadir/square-errors.dat};
  \addlegendentry{\(L=2\)}
  \addplot[teal!70!black,densely dotted,line width=1.15pt]
   table[x=x,y=e3]{\squarefiguredatadir/square-errors.dat};
  \addlegendentry{\(L=3\)}
 \end{axis}
\end{tikzpicture}

%% file: sections/approximation.tex
\section{Polynomial and analytic approximation}
\label{sec:Approx}

We now use the local square remainder to guide the construction of
polynomial networks. We first control the error and output magnitude
of a multiplication module, then combine factors of comparable degree
to build monomials. On an interior cube, the smaller inputs to
higher-degree products allow shallower modules. This allocation gives
analytic approximation with error \(O(e^{-cL^{1/d}})\),
depth at most \(L\), and size \(O(L)\).

Fix \(0<\beta<1\) and \(\alpha\) satisfying
\eqref{eq:notes-square-alpha}.  Throughout this section, all networks use
the single activation \(\psi\) from Section~\ref{sec:Overcome}.
We use \(n\) for the number of generator iterations in each square
or multiplication module and reserve \(L\) for the total depth budget,
including linear hidden layers. We use \(Q_n\) for the square network
defined in \eqref{eq:notes-square-network},
with depth \(2n-1\), and we write
\begin{equation}
 \varepsilon_n:=\beta^n\alpha^{-2(2^n-1)}
 \leq\alpha^2\exp\bigl(-2\log\alpha\,2^n\bigr),
 \qquad n\geq1.
 \label{eq:notes-module-epsilon}
\end{equation}
The exact remainder and \eqref{eq:notes-iterate-decay} give the stronger
pointwise estimate
\[
 0\leq e_n(t):=t^2-Q_n(t)=\beta^{n}\left(\varphi^{\circ n}(t)\right)^2
 \leq\varepsilon_n|t|^{2^{n+1}},\qquad |t|\leq1.
\]
Here \(0<\varepsilon_n<1\), and increasing \(n\) also raises the
order of vanishing of the error at the origin. We will use this local
behavior together with control of intermediate magnitudes to assign
depth to each multiplication module.
We use the depth, width, and size conventions of
Definition~\ref{def:notes-network-model}. Width counts all neurons
in each hidden layer, including linear neurons. Later modules may
reuse earlier outputs through skip connections, with every nonzero
connection weight included in the size. Thus width four below does
not bound the number of values retained for reuse or the width needed
to carry them through consecutive layers.

\subsection{Stable multiplication and polynomial networks}

Define
\begin{equation}
 \mathcal M_n(u,v)
 :=Q_n\!\left(\frac{u+v}{2}\right)
   -Q_n\!\left(\frac{u-v}{2}\right).
 \label{eq:notes-Mn}
\end{equation}
Using the constants \(a,b\) from
\eqref{eq:notes-smooth-activation}, every multiplication module has
the same repeated layer pattern:
\begin{equation}
 \begin{gathered}
 C(u,v)=(u+v,-u-v,u-v,-u+v)^{\mathsf T},\\
 \boldsymbol z_1=\psi\!\left(\frac b2 C(u,v)\right),\qquad
 \boldsymbol z_{\ell+1}=\psi(ab\boldsymbol z_\ell),
 \quad 1\leq\ell<n,\\
 m_0=0,\qquad
 m_\ell=m_{\ell-1}+\alpha a\beta^{\ell-1}
 (z_{\ell,1}+z_{\ell,2}-z_{\ell,3}-z_{\ell,4}),
 \quad 1\leq\ell\leq n.
 \end{gathered}
 \label{eq:notes-multiplication-layers}
\end{equation}
Here \(\psi\) acts componentwise and
\(m_n=\mathcal M_n(u,v)\). Four-neuron nonlinear layers
\(\boldsymbol z_\ell\) alternate with scalar linear accumulators
\(m_\ell\). Within a module, each update of
\(\boldsymbol z_\ell\) skips one linear layer, and each update of
\(m_\ell\) skips one nonlinear layer. The two inputs enter only the
first nonlinear layer; the final accumulator supplies the product
to later modules. Thus changing \(n\) only changes the number of
repetitions of this fixed pair of layers. As a standalone network,
the module has \(m_n\) as its affine output. When used inside a
larger network, this output becomes a scalar linear hidden layer.

The one-sided square error controls both the error and the magnitude
of the approximate product.

\begin{lemma}[Stable multiplication]
\label{lem:notes-multiplication}
For \(u,v\in[-1,1]\),
\begin{equation}
 \begin{gathered}
 uv\,\mathcal M_n(u,v)\geq0,
 \qquad |\mathcal M_n(u,v)|\leq|uv|,\\
 |uv-\mathcal M_n(u,v)|
 \leq\varepsilon_n
 \left(\frac{|u|+|v|}{2}\right)^{2^{n+1}}
 \leq\varepsilon_n.
 \end{gathered}
 \label{eq:notes-Mn-error}
\end{equation}
The module has depth \(2n-1\), width at most four,
and size at most \(9n+3\).  All its weights and biases are bounded by
\(B_{\alpha,\beta}\), the constant in
Theorem~\ref{thm:notes-square-network}.
\end{lemma}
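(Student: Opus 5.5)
Write \(p=(u+v)/2\) and \(q=(u-v)/2\), so that \(pq\)-type identities give \(uv=p^2-q^2\). Since \(|p|+|q|=\max\{|u|,|v|\}\leq1\), both arguments lie in \([-1,1]\), and the local square estimate \(0\leq e_n(t)=t^2-Q_n(t)\leq\varepsilon_n|t|^{2^{n+1}}\) applies to each. Then
\[
 uv-\mathcal M_n(u,v)=e_n(p)-e_n(q),
\]
so \(|uv-\mathcal M_n|\leq\max\{e_n(p),e_n(q)\}\leq\varepsilon_n\max\{|p|,|q|\}^{2^{n+1}}\), because both errors are nonnegative. Moreover \(\max\{|p|,|q|\}=(|u|+|v|)/2\), which gives the stated local bound. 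The final inequality \(\leq\varepsilon_n\) follows from \((|u|+|v|)/2\leq1\).

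For the sign and magnitude claims, I will use the exact remainder \(Q_n(t)=t^2-\beta^n(\varphi^{\circ n}(t))^2\) together with the monotonicity of \(g(t):=Q_n(t)\) in \(|t|\). Since \(\varphi\) is even, nonnegative, and increasing on \([0,1]\), and \(\varphi([0,1])\subseteq[0,1]\), each \(\varphi^{\circ\ell}\) is even and nondecreasing in \(|t|\). Hence \(Q_n=\alpha\sum_{\ell=1}^n\beta^{\ell-1}\varphi^{\circ\ell}\) is a nonnegative, even function that is nondecreasing in \(|t|\). Without loss of generality take \(uv\geq0\). Then \(|p|\geq|q|\), so \(\mathcal M_n=Q_n(p)-Q_n(q)\geq0\), which gives \(uv\,\mathcal M_n\geq0\). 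The case \(uv\leq0\) is symmetric, with \(|q|\geq|p|\).

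For \(|\mathcal M_n|\leq|uv|\), the key claim is that \(h(t):=t^2-Q_n(t)=e_n(t)\) is also nondecreasing in \(|t|\) on \([0,1]\). This follows because \(e_n(t)=\beta^n(\varphi^{\circ n}(t))^2\) and \(\varphi^{\circ n}\geq0\) is nondecreasing in \(|t|\). Thus, when \(|p|\geq|q|\), we get \(e_n(p)\geq e_n(q)\), so \(uv-\mathcal M_n=e_n(p)-e_n(q)\geq0\), that is, \(\mathcal M_n\leq uv\). Combined with \(\mathcal M_n\geq0\), this gives \(|\mathcal M_n|\leq|uv|\). This monotonicity step is the only real point; the rest is bookkeeping.

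For the resource counts, I will read the architecture off \eqref{eq:notes-multiplication-layers}. There are \(n\) four-neuron nonlinear layers alternating with \(n-1\) hidden accumulators, and \(m_n\) is the output, so the depth is \(2n-1\) and the width is 4. The identity \(Q_n(w)\) equals \(\alpha a\sum_\ell\beta^{\ell-1}(z^+_\ell+z^-_\ell)\) with \(z_1^\pm=\psi(\pm bw)\), applied with \(w=(u\pm v)/2\); this justifies the layer formulas, with \(z_{\ell,1},z_{\ell,2}\) the two branches for \(p\) and \(z_{\ell,3},z_{\ell,4}\) those for \(q\). Counting weights: the first layer has 8 weights of size \(b/2\); the later nonlinear layers have \(4(n-1)\) weights \(ab\); the accumulators have \(4n\) branch weights \(\pm\alpha a\beta^{\ell-1}\) plus \(n-1\) unit weights. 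The total is \(9n+3\) at most; the exact count is \(9n-1\), within the bound. All these magnitudes are at most \(B_{\alpha,\beta}\) because \(b/2\leq b\) and \(\beta^{\ell-1}\leq1\). There are no biases.
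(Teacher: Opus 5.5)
Your proof is correct and follows the paper's argument step for step. It uses the identity $uv-\mathcal M_n(u,v)=e_n(s)-e_n(t)$, the evenness and monotonicity in $|t|$ of both $Q_n$ and $e_n=\beta^n(\varphi^{\circ n})^2$ for the sign and magnitude claims, the bound $|e_n(s)-e_n(t)|\le\max\{e_n(s),e_n(t)\}$, and the same layer-by-layer weight count.

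There is one arithmetic slip in the resource count. Your tally $8+4(n-1)+4n+(n-1)$ equals exactly $9n+3$, not $9n-1$, so for distinct inputs the size bound is attained rather than beaten. The paper also covers the case where $u$ and $v$ are the same earlier output: the first-layer weights then reduce to $\pm b$, which still respects $B_{\alpha,\beta}$.
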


\begin{proof}
Put \(s=(u+v)/2\) and \(t=(u-v)/2\). Both arguments belong to
\([-1,1]\), and
\[
 uv-\mathcal M_n(u,v)=e_n(s)-e_n(t).
\]
Both \(Q_n\) and \(e_n=\beta^n(\varphi^{\circ n})^2\) are even and
increasing on \([0,1]\). If \(uv\geq0\), then \(|s|\geq|t|\), so
\(\mathcal M_n(u,v)\geq0\) and
\(uv-\mathcal M_n(u,v)\geq0\). If \(uv\leq0\), both inequalities
reverse. This proves the sign and magnitude assertions.
The pointwise square error gives
\[
 |e_n(s)-e_n(t)|
 \leq\max\{e_n(s),e_n(t)\}
 \leq\varepsilon_n
 \left(\frac{|u|+|v|}{2}\right)^{2^{n+1}}.
\]
The realization \eqref{eq:notes-multiplication-layers} has \(n\)
nonlinear hidden layers and \(n-1\) linear hidden layers, so its
depth is \(2n-1\) and its width is at most four.
The first nonlinear layer uses at most eight weights,
its successors use \(4(n-1)\), the contributions to the accumulators
use \(4n\), and the accumulator connections use \(n-1\). Their sum is
\(9n+3\). For distinct input sources, the weights belong to
\(\{\pm b/2,ab,\pm\alpha a\beta^{\ell-1},1\}\), so their magnitudes
are at most \(B_{\alpha,\beta}\), and all biases are zero.
If the same earlier output is used for both \(u\) and \(v\), the
first-layer inputs reduce to \(bu,-bu,0,0\), which obey the same
bounds.
\end{proof}

Fix an integer \(d\geq1\), and write
\(\mathbb Z_+=\{0,1,2,\ldots\}\). Let \(\boldsymbol e_i\) denote the
\(i\)-th standard basis vector of \(\mathbb R^d\).
For \(\boldsymbol{k}\in\mathbb Z_+^d\), write
\(|\boldsymbol{k}|=k_1+\cdots+k_d\) and
\(\boldsymbol{x}^{\boldsymbol{k}}=x_1^{k_1}\cdots x_d^{k_d}\).
For each \(\boldsymbol k\) of total degree \(j=|\boldsymbol k|\geq2\),
the two input factors are fixed by
\begin{equation}
 k_i^-:=\min\!\left\{k_i,
 \max\!\left(0,\left\lfloor\frac j2\right\rfloor
                  -\sum_{r<i}k_r\right)\right\},
 \quad 1\leq i\leq d,
 \qquad \boldsymbol k^+:=\boldsymbol k-\boldsymbol k^-.
 \label{eq:notes-factor-indices}
\end{equation}
This rule assigns the first \(\lfloor j/2\rfloor\) coordinate factors
of \(\boldsymbol x^{\boldsymbol k}\) to
\(\boldsymbol x^{\boldsymbol k^-}\), in coordinate order. In particular,
\[
 |\boldsymbol k^-|=\lfloor j/2\rfloor,
 \qquad |\boldsymbol k^+|=\lceil j/2\rceil.
\]
Starting with \(X_{\boldsymbol e_i,n}(\boldsymbol{x})=x_i\), define
the remaining monomial approximants in increasing total degree by
\begin{equation}
 X_{\boldsymbol{k},n}(\boldsymbol{x})
 =\mathcal M_n\!\left(
 X_{\boldsymbol k^-,n}(\boldsymbol{x}),
 X_{\boldsymbol k^+,n}(\boldsymbol{x})\right).
 \label{eq:notes-multivariate-monomials}
\end{equation}
Both factors have smaller total degree, so their approximations have
already been computed. A factor may be reused in several products,
including twice in the same product.

For an integer \(p\geq2\), the number of multiplication modules is
\begin{equation}
 M_{d,p}=\sum_{j=2}^{p}\binom{d+j-1}{d-1}
 =\binom{d+p}{d}-d-1.
 \label{eq:notes-number-monomials}
\end{equation}
We place the multiplication modules one after another, in increasing
total degree. Within each degree, we use lexicographic order: at the
first coordinate where two multi-indices differ, the smaller entry
comes first. Denote the resulting order by \(\prec\).

For example, take \(d=2\), \(p=3\), and use \(n\) iterations in every module.
The seven multiplication modules occur in the following order:
\[
 \begin{array}{c|ccc|cccc}
  \text{Module} & 1 & 2 & 3 & 4 & 5 & 6 & 7\\ \hline
  \boldsymbol k & (0,2) & (1,1) & (2,0)
    & (0,3) & (1,2) & (2,1) & (3,0)\\
  \text{Target} & x_2^2 & x_1x_2 & x_1^2
    & x_2^3 & x_1x_2^2 & x_1^2x_2 & x_1^3
 \end{array}
\]
The first three modules approximate the degree-two monomials.
The fourth and fifth modules both reuse the first module's output:
\[
 \begin{aligned}
  X_{(0,3),n}&=\mathcal M_n\bigl(x_2,X_{(0,2),n}\bigr),\\
  X_{(1,2),n}&=\mathcal M_n\bigl(x_1,X_{(0,2),n}\bigr).
 \end{aligned}
\]
The sixth and seventh similarly use the outputs of the second and
third modules, respectively, together with \(x_1\).
If \(n=2\), the successive modules occupy hidden layers
\(1\)--\(4\), \(5\)--\(8\), \(\ldots\), \(25\)--\(28\),
giving depth \(28\) and width at most four.

Let \(\nu_j\geq1\) be the integer iteration count assigned to each
module of degree \(j\). The polynomial construction above uses
\(\nu_j=n\), while the analytic construction below allows this count
to depend on \(j\). Each embedded module consists of \(\nu_j\) pairs
of hidden layers, with four nonlinear neurons followed by one linear
neuron in each pair. The last linear neuron gives the module output.
Placing these modules sequentially keeps the overall width at most four.

The first nonlinear layer of each module receives the two factor
outputs specified by \eqref{eq:notes-factor-indices}. A factor of
degree one is supplied directly by the corresponding input coordinate.
The remaining layers follow the fixed pattern in
\eqref{eq:notes-multiplication-layers}, with internal skip connections
crossing one intervening layer. Once computed, a module output can
be reused by later products and by the final affine output.
Connections between modules may cross several layers, but their
endpoints are determined by the factor rule and the module order.
Thus, for fixed \(d,p\), and \((\nu_j)_{j=2}^p\), the module
connections are fixed independently of the input values and
polynomial coefficients. The latter enter only through the final
affine output.

For completeness, the layer positions can be written explicitly.
The number of hidden layers before module \(\boldsymbol k\) is
\begin{equation}
 s_{\boldsymbol k}:=
 2\!\sum_{\substack{2\leq|\boldsymbol h|\leq p\\
                    \boldsymbol h\prec\boldsymbol k}}
 \nu_{|\boldsymbol h|}.
 \label{eq:notes-module-layer-offset}
\end{equation}
Its \(\ell\)-th nonlinear--linear pair occupies layers
\(s_{\boldsymbol k}+2\ell-1\) and \(s_{\boldsymbol k}+2\ell\),
respectively, for \(1\leq\ell\leq\nu_{|\boldsymbol k|}\).
In particular, its output is the linear neuron in layer
\(s_{\boldsymbol k}+2\nu_{|\boldsymbol k|}\).

\begin{theorem}[Polynomial approximation]
\label{thm:notes-multivariate-polynomial}
Let \(d,n\geq1\) and \(p\geq2\) be integers, and let
\(P(\boldsymbol{x})=\sum_{|\boldsymbol{k}|\leq p}
a_{\boldsymbol{k}}\boldsymbol{x}^{\boldsymbol{k}}\).
The network
\begin{equation}
 \mathcal P_{p,n}(\boldsymbol{x})
 =a_{\boldsymbol0}+\sum_{i=1}^d a_{\boldsymbol e_i}x_i
  +\sum_{2\leq|\boldsymbol{k}|\leq p}
       a_{\boldsymbol{k}}X_{\boldsymbol{k},n}(\boldsymbol{x})
 \label{eq:notes-multivariate-polynomial-network}
\end{equation}
has width at most four, depth at most
\(2nM_{d,p}\), and size at most
\begin{equation}
 (9n+4)M_{d,p}+d+1.
 \label{eq:notes-multivariate-polynomial-size}
\end{equation}
Its weights and biases have magnitude at most
\(\max\{B_{\alpha,\beta},\max_{|\boldsymbol{k}|\leq p}
|a_{\boldsymbol{k}}|\}\), and
\begin{equation}
 \|P-\mathcal P_{p,n}\|_{L^\infty([-1,1]^d)}
 \leq\varepsilon_n
 \sum_{2\leq|\boldsymbol{k}|\leq p}
 (|\boldsymbol{k}|-1)|a_{\boldsymbol{k}}|.
 \label{eq:notes-multivariate-polynomial-error}
\end{equation}

In other words, for fixed \(d,p\) and \(P\), there exist compositional
networks of depth at most \(L\) whose uniform error on \([-1,1]^d\)
is \(O(\exp[-(\log\alpha)\,2^{L/(2M_{d,p})}])\) as
\(L\to\infty\). This is a doubly exponential error bound in the
total depth \(L\).
\end{theorem}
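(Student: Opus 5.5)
The proof splits into resource counting and an error recursion. The error recursion is the only substantive step.

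\textbf{Resource counts.} The modules are placed sequentially in the order \(\prec\), each with \(\nu_j=n\). Every module then occupies \(2n\) hidden layers: \(2n-1\) internal layers by Lemma~\ref{lem:notes-multiplication}, plus the layer on which its output sits as a linear neuron. So the total depth is at most \(2nM_{d,p}\), and the width is at most four. For the size, each module contributes at most \(9n+3\) weights by Lemma~\ref{lem:notes-multiplication}. One more weight per module comes from the final affine output coefficient \(a_{\boldsymbol k}\). The terms \(a_{\boldsymbol 0}\) and \(a_{\boldsymbol e_i}x_i\) contribute \(d+1\). This gives \((9n+4)M_{d,p}+d+1\). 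The weight bound follows because the module weights are bounded by \(B_{\alpha,\beta}\) and the output weights are the coefficients \(a_{\boldsymbol k}\).

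\textbf{Error recursion.} I will prove by induction on \(j=|\boldsymbol k|\geq1\) that for \(\boldsymbol x\in[-1,1]^d\),
\[
 |X_{\boldsymbol k,n}(\boldsymbol x)|\leq|\boldsymbol x^{\boldsymbol k}|\leq1,
 \qquad
 |\boldsymbol x^{\boldsymbol k}-X_{\boldsymbol k,n}(\boldsymbol x)|\leq(|\boldsymbol k|-1)\varepsilon_n .
\]
Both statements are trivial for \(j=1\). For the inductive step, write \(u=X_{\boldsymbol k^-,n}\), \(v=X_{\boldsymbol k^+,n}\), \(U=\boldsymbol x^{\boldsymbol k^-}\) and \(V=\boldsymbol x^{\boldsymbol k^+}\). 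By the inductive hypothesis \(u,v\in[-1,1]\). Lemma~\ref{lem:notes-multiplication} then gives \(|\mathcal M_n(u,v)|\leq|uv|\leq|UV|\), which is the magnitude claim. For the error,
\[
 |UV-\mathcal M_n(u,v)|\leq|UV-uv|+|uv-\mathcal M_n(u,v)|
 \leq |U-u||V|+|u||V-v|+\varepsilon_n .
\]
Using \(|V|,|u|\leq1\) and the inductive hypothesis, this is at most
\[
 (|\boldsymbol k^-|-1)\varepsilon_n+(|\boldsymbol k^+|-1)\varepsilon_n+\varepsilon_n=(j-1)\varepsilon_n .
\]
This is the main obstacle: the bound must stay additive in the degree. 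The magnitude control \(|u|\leq1\) from the stability lemma is exactly what prevents error amplification. One subtlety is a factor of degree one, where the error is \(0\) rather than \(-\varepsilon_n\). Replacing \(|\boldsymbol k^\pm|-1\) by \(\max\{|\boldsymbol k^\pm|-1,0\}\) only improves the bound, since \(\max\{a-1,0\}+\max\{b-1,0\}+1\leq a+b-1\) whenever \(a,b\geq1\). Summing \(|a_{\boldsymbol k}|\) times these bounds over \(2\leq|\boldsymbol k|\leq p\) yields \eqref{eq:notes-multivariate-polynomial-error}.

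\textbf{Depth form.} Given a depth budget \(L\geq 2M_{d,p}\), take \(n=\lfloor L/(2M_{d,p})\rfloor\), so the depth is at most \(L\). By \eqref{eq:notes-module-epsilon}, \(\varepsilon_n\leq\alpha^2\exp(-(\log\alpha)2^{n+1})\). Since \(n+1\geq L/(2M_{d,p})\), we get
\[
 \varepsilon_n\leq\alpha^2\exp\!\bigl[-(\log\alpha)\,2^{L/(2M_{d,p})}\bigr].
\]
The coefficient sum in \eqref{eq:notes-multivariate-polynomial-error} is a constant for fixed \(P\), so the stated \(O\)-bound follows.
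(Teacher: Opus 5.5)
Your proposal is correct and follows the paper's proof essentially step for step. It uses the same count of $2n$ hidden layers and $9n+3$ coefficients per embedded module, plus $M_{d,p}+d+1$ output coefficients; the same induction in $|\boldsymbol k|$, combining $|\mathcal M_n(u,v)|\le|uv|$ with the triangle inequality to get $E_{\boldsymbol k}\le E_{\boldsymbol k^-}+E_{\boldsymbol k^+}+\varepsilon_n$; and the same choice $n=\lfloor L/(2M_{d,p})\rfloor$. The one blemish is your degree-one ``subtlety'', which is vacuous: both factors always have degree at least one and $(1-1)\varepsilon_n=0$, so the bound $(|\boldsymbol k|-1)\varepsilon_n$ is never negative and no $\max$ is needed.
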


\begin{proof}
Induction in \(|\boldsymbol{k}|\), using
Lemma~\ref{lem:notes-multiplication}, gives
\begin{equation}
 \begin{gathered}
 |X_{\boldsymbol{k},n}(\boldsymbol x)|
 \leq|\boldsymbol x^{\boldsymbol k}|,
 \qquad \boldsymbol x\in[-1,1]^d,\\
 \|X_{\boldsymbol{k},n}-\boldsymbol{x}^{\boldsymbol{k}}\|_{L^\infty([-1,1]^d)}
 \leq(|\boldsymbol{k}|-1)\varepsilon_n.
 \end{gathered}
 \label{eq:notes-multivariate-monomial-error}
\end{equation}
The magnitude bound follows directly from
\(|\mathcal M_n(u,v)|\leq|uv|\). For the error, let
\(E_{\boldsymbol k}\) denote the uniform error for a monomial.
The two factor approximations and the exact monomials have magnitude
at most one, so
\[
 E_{\boldsymbol k}
 \leq E_{\boldsymbol k^-}+E_{\boldsymbol k^+}+\varepsilon_n
 \leq(|\boldsymbol k|-1)\varepsilon_n.
\]
Taking the weighted sum proves
\eqref{eq:notes-multivariate-polynomial-error}.
By Lemma~\ref{lem:notes-multiplication}, each multiplication module
uses at most \(9n+3\) nonzero weights and biases, including the
weights on its two factor inputs. Previously computed factors are
reused through these connections, without copying the modules that
produced them. Hence the \(M_{d,p}\) modules together use at most
\((9n+3)M_{d,p}\) nonzero weights and biases.

Retaining their affine outputs as linear hidden neurons adds no
coefficients. The final affine output contributes at most
\(M_{d,p}+d+1\) additional coefficients, giving
\[
 S\leq(9n+3)M_{d,p}+M_{d,p}+d+1
   =(9n+4)M_{d,p}+d+1.
\]
Each embedded module contributes \(2n\) hidden layers.
Their sequential arrangement therefore gives depth at most
\(2nM_{d,p}\) and width at most four.

For the statement in terms of total depth, take
\(n=\lfloor L/(2M_{d,p})\rfloor\). When \(L\geq2M_{d,p}\),
we have \(2nM_{d,p}\leq L\) and
\(2^n\geq2^{L/(2M_{d,p})-1}\).
Substituting this into \eqref{eq:notes-module-epsilon} and
\eqref{eq:notes-multivariate-polynomial-error} gives the stated rate.
\end{proof}

The polynomial construction above uses the same iteration count \(n\)
in every multiplication module. For analytic approximation, we also
choose the polynomial truncation degree \(p\). The total error then
has two contributions: the truncation error and the error in realizing
the polynomial by a network. We choose \(p\) and the module depths
together so that the network error does not affect the convergence
rate of the polynomial truncation, while keeping the total depth small.
On an interior cube, the magnitude estimate in
\eqref{eq:notes-multivariate-monomial-error} and the local square
remainder allow higher-degree products to use shallower modules.
This leads to the degree-dependent allocation below.

\subsection{Power-series targets and balanced depth allocation}

We consider power series with absolutely summable coefficients on
\([-1,1]^d\) and approximate on \([-\delta,\delta]^d\), where
\(0<\delta<1\). This setting includes functions that extend
holomorphically to a complex polydisk centered at the origin with
all radii greater than one. Truncation at degree \(p\) gives a tail
of order \(\delta^{p+1}\). We choose the module depths so that the
accumulated multiplication error does not affect this geometric rate.

The key is that the local error in \eqref{eq:notes-Mn-error} depends
on the sizes of the two factors. For a product of total degree \(j\),
the balanced factorization uses degrees \(\lfloor j/2\rfloor\) and
\(\lceil j/2\rceil\). The magnitude bound in
Lemma~\ref{lem:notes-multiplication} ensures that both approximate
factors have magnitude at most \(\delta^{\lfloor j/2\rfloor}\).
Thus, writing \(m=\lfloor j/2\rfloor\),
\[
 |uv-\mathcal M_n(u,v)|
 \leq\varepsilon_n\delta^{m2^{n+1}}.
\]
Since \(\varepsilon_n<1\), the condition \(m2^n\geq p\) makes this
local error at most \(\delta^{2p}\). For each integer \(p\geq2\),
we therefore assign
\[
 n_{j,p}:=\left\lceil
 \log_2\frac{p}{\lfloor j/2\rfloor}
 \right\rceil,\qquad 2\leq j\leq p.
\]
These positive iteration counts are nonincreasing in \(j\).
Low-degree products receive more iterations, while products of degree
comparable to \(p\) need only a bounded number. As the proof below
shows, the accumulated error in each monomial of degree at most \(p\)
is bounded by \((p-1)\delta^{2p}=O(\delta^{p+1})\) for fixed \(\delta\).
Thus realizing the truncated polynomial by a network preserves the
overall \(O(\delta^{p+1})\) error bound.

Each module contributes \(2n_{j,p}\) hidden layers.
We use the same factorization rule as in
\eqref{eq:notes-factor-indices} and the same module order,
with \(\nu_j=n_{j,p}\) in
\eqref{eq:notes-module-layer-offset}.

Write \(X_{\boldsymbol k}^{(p)}\) for the resulting monomial approximants.
Figure~\ref{fig:degree-connections} shows the one-dimensional factor
rule and the depth allocation for \(p=8\).
\begin{figure}[!htb]
 \centering
 \includegraphics[width=0.84\linewidth]{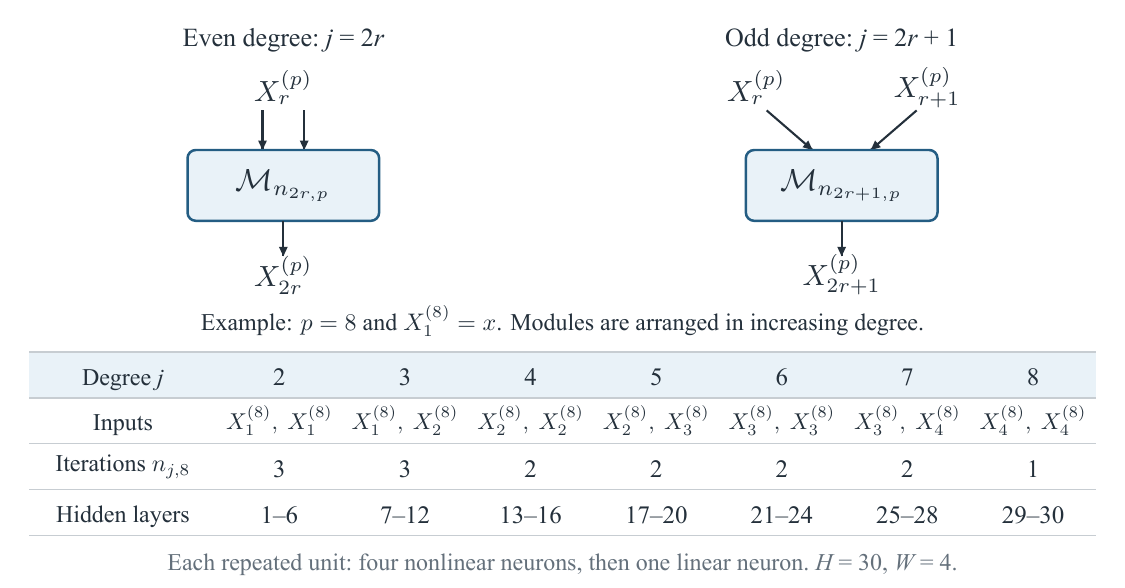}
 \caption{Depth allocation in one dimension. The upper diagrams show
 the general factor rule, and the table gives \(p=8\).
 Here \(X_j^{(p)}\) approximates \(x^j\).
 The iteration counts \(n_{j,8}\) decrease from three to one as the
 degree increases. Each module contributes \(2n_{j,8}\) hidden layers,
 giving total depth \(30\) and width at most four.}
 \label{fig:degree-connections}
\end{figure}

These approximants satisfy
\[
 \begin{aligned}
 X_{\boldsymbol e_i}^{(p)}(\boldsymbol x)&=x_i,\\
 X_{\boldsymbol k}^{(p)}(\boldsymbol x)
 &=\mathcal M_{n_{|\boldsymbol k|,p}}\!\left(
 X_{\boldsymbol k^-}^{(p)}(\boldsymbol x),
 X_{\boldsymbol k^+}^{(p)}(\boldsymbol x)\right),
 \qquad 2\leq|\boldsymbol k|\leq p.
 \end{aligned}
\]
The total depth is at most \(2T_{d,p}\), where
\[
 T_{d,p}:=\sum_{j=2}^{p}
 \binom{d+j-1}{d-1}n_{j,p}.
\]
The theorem below shows that \(T_{d,p}\leq4M_{d,p}=O(p^d)\).
Thus the average number of iterations per module stays bounded,
even though the low-degree modules become deeper as \(p\) increases.

\begin{samepage}
\begin{theorem}[Analytic approximation]
\label{thm:notes-multivariate-analytic}
Suppose
\begin{equation}
 f(\boldsymbol{x})=\sum_{\boldsymbol{k}\in\mathbb Z_+^d}
 a_{\boldsymbol{k}}\boldsymbol{x}^{\boldsymbol{k}},
 \qquad
 A_f:=\sum_{\boldsymbol{k}\in\mathbb Z_+^d}|a_{\boldsymbol{k}}|<\infty
 \label{eq:notes-analytic-assumption}
\end{equation}
on \([-1,1]^d\), and fix \(0<\delta<1\).
For each integer \(p\geq2\), the network
\[
 \mathcal A_p(\boldsymbol x)
 :=a_{\boldsymbol0}
   +\sum_{1\leq|\boldsymbol k|\leq p}
       a_{\boldsymbol k}X_{\boldsymbol k}^{(p)}(\boldsymbol x)
\]
satisfies
\begin{equation}
 \|f-\mathcal A_p\|_{L^\infty([-\delta,\delta]^d)}
 \leq\frac{A_f}{1-\delta}\,\delta^{p+1}.
 \label{eq:notes-multivariate-analytic-basic}
\end{equation}
Its depth \(H\), width \(W\), and size \(S\) obey
\[
 \begin{gathered}
 H\leq2T_{d,p}\leq8M_{d,p},\qquad W\leq4,\\
 S\leq9T_{d,p}+4M_{d,p}+d+1\leq13T_{d,p}+d+1.
 \end{gathered}
\]

Consequently, for every sufficiently large integer \(L\),
there exists a compositional network \(\mathcal N_L\) with depth at most \(L\),
width at most \(4\), size at most \(\frac{13}{2}L+d+1\), and
\begin{equation}
 \begin{gathered}
 \|f-\mathcal N_L\|_{L^\infty([-\delta,\delta]^d)}
 \leq C A_f e^{-cL^{1/d}},\\
 C=\frac{\delta^{-d}}{1-\delta},\quad
 c=\log(1/\delta)\left(\frac{d!}{8}\right)^{1/d}.
 \end{gathered}
 \label{eq:notes-multivariate-rate}
\end{equation}
All these networks have weights and biases bounded by
\(\max\{B_{\alpha,\beta},A_f\}\), independently of \(p\) and \(L\).
\end{theorem}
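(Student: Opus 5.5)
We split the error into a truncation tail and a network-realization error, bound the resources by counting modules and iterations, and then choose \(p\) in terms of \(L\).

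\textbf{Truncation tail.} On \([-\delta,\delta]^d\) we have \(|\boldsymbol x^{\boldsymbol k}|\leq\delta^{|\boldsymbol k|}\). Hence
\[
\Bigl|\sum_{|\boldsymbol k|>p}a_{\boldsymbol k}\boldsymbol x^{\boldsymbol k}\Bigr|\leq\delta^{p+1}\sum_{|\boldsymbol k|>p}|a_{\boldsymbol k}|.
\]

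\textbf{Network error.} Let \(E_{\boldsymbol k}:=\sup_{[-\delta,\delta]^d}|X^{(p)}_{\boldsymbol k}-\boldsymbol x^{\boldsymbol k}|\). Induction on the degree, using the magnitude bound of Lemma~\ref{lem:notes-multiplication}, gives \(|X^{(p)}_{\boldsymbol k}|\leq\delta^{|\boldsymbol k|}\). For \(j=|\boldsymbol k|\) and \(m=\lfloor j/2\rfloor\), both factors then have magnitude at most \(\delta^m\). The local bound \eqref{eq:notes-Mn-error} and the choice \(m2^{n_{j,p}}\geq p\) give a module error of at most \(\varepsilon_n\delta^{m2^{n+1}}\leq\delta^{2p}\).

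Propagating errors through the product works as in Theorem~\ref{thm:notes-multivariate-polynomial}, since all factors have modulus at most one:
\[
E_{\boldsymbol k}\leq E_{\boldsymbol k^-}+E_{\boldsymbol k^+}+\delta^{2p},
\]
so \(E_{\boldsymbol k}\leq(j-1)\delta^{2p}\). Therefore
\[
|f-\mathcal A_p|\leq\delta^{p+1}\sum_{|\boldsymbol k|>p}|a_{\boldsymbol k}|+(p-1)\delta^{2p}\sum_{|\boldsymbol k|\leq p}|a_{\boldsymbol k}|.
\]
To absorb the second term, use \((p-1)\delta^{p-1}\leq\sum_{i\geq0}\delta^i=1/(1-\delta)\), which holds since it is \(p-1\) terms each \(\leq\) the sum. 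This gives \((p-1)\delta^{2p}\leq\delta^{p+1}/(1-\delta)\), and the two pieces combine into \eqref{eq:notes-multivariate-analytic-basic}.

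\textbf{Resources.} Depth, width, and size follow exactly as in Theorem~\ref{thm:notes-multivariate-polynomial}, with \(n\) replaced by \(n_{j,p}\) per module. The depth is \(2T_{d,p}\), the modules contribute \(\sum(9n_{j,p}+3)\leq9T_{d,p}+3M_{d,p}\) to the size, and the output layer contributes \(M_{d,p}+d+1\). Since \(n_{j,p}\geq1\) we have \(M_{d,p}\leq T_{d,p}\), which yields \(13T_{d,p}\).

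\textbf{The bound \(T_{d,p}\leq4M_{d,p}\).} This is the step I expect to be the main obstacle. I will write \(N_j=\binom{d+j-1}{d-1}\), which is nondecreasing in \(j\), and use
\[
n_{j,p}\leq 1+\log_2\!\bigl(p/\lfloor j/2\rfloor\bigr)\leq 2+\log_2(p/j)+\text{small}.
\]
Since \(N_j\) is increasing and \(\log_2(p/j)\) is decreasing, Chebyshev's sum inequality gives
\[
\sum_j N_j\log_2(p/j)\leq\frac{M_{d,p}}{p-1}\sum_{j=2}^{p}\log_2(p/j).
\]
The average \(\frac{1}{p-1}\sum_{j=2}^p\log_2(p/j)\) is at most about \(1/\ln2\approx1.44\), by comparison with \(\int_0^1\log_2(1/t)\,dt\). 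Combined with careful treatment of the floor (for \(j\) odd, \(\lfloor j/2\rfloor\geq j/3\); small \(j\) handled directly), this gives a constant below \(4\). I will most likely first check small \(j\) separately and bound the remaining sum by an integral.

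\textbf{Rate in terms of \(L\).} Given large \(L\), choose the largest \(p\) with \(8M_{d,p}\leq L\). The depth is then at most \(L\), and \(S\leq13T_{d,p}+d+1\leq\tfrac{13}{2}\cdot2T_{d,p}+d+1\leq\tfrac{13}{2}L+d+1\).

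Maximality gives \(8M_{d,p+1}>L\). Since \(M_{d,p+1}<\binom{d+p+1}{d}\leq(p+1)^d\cdot\frac{(1+d/(p+1))^d}{d!}\), this gives approximately \(p+1\geq(d!L/8)^{1/d}-d\). Then
\[
\delta^{p+1}\leq\delta^{-d}\exp\!\bigl(-\log(1/\delta)(d!L/8)^{1/d}\bigr),
\]
which is the stated constant \(C\) and exponent \(c\). This estimate needs slight care, possibly using \(\binom{d+p+1}{d}\leq(p+1+d)^d/d!\) and \(\delta^{-d}\) to absorb the shift by \(d\).

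\textbf{Weights.} The weight bound follows from Lemma~\ref{lem:notes-multiplication} together with \(|a_{\boldsymbol k}|\leq A_f\).
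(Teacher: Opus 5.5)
Your error analysis, resource counts, weight bound, and choice of \(p\) from \(L\) are correct and follow the paper. Two of your steps differ from the paper, and both are harmless variants: splitting \(A_f\) into the parts with \(|\boldsymbol k|\le p\) and \(|\boldsymbol k|>p\), and taking the largest \(p\) with \(8M_{d,p}\le L\) together with \(\binom{d+p+1}{d}\le(p+1+d)^d/d!\).

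\textbf{The gap} is the step you flagged, \(T_{d,p}\le4M_{d,p}\). The route you sketch does not give a constant below \(4\).

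You write \(n_{j,p}\le c_j+\log_2(p/j)\) and apply Chebyshev's sum inequality only to \(\log_2(p/j)\). The remaining term \(c_j\) depends on the parity of \(j\), so it is not monotone and Chebyshev's inequality does not apply to \(\sum_jN_jc_j\). You are left with \(\max_jc_j\). No better bound on \(\sum_jN_jc_j/M_{d,p}\) holds in general, because \(N_j\) can put almost all its mass on an odd \(j=p\) (e.g.\ \(d\gg p\)).

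With your bound \(\lfloor j/2\rfloor\ge j/3\) for odd \(j\), this maximum is \(1+\log_2 3\approx2.585\). Meanwhile \(\frac1{p-1}\sum_{j=2}^p\log_2(p/j)\to1/\log2\approx1.443\) as \(p\to\infty\). The two worst cases occur at opposite ends of the range of \(j\), but you bound them separately. This gives only about \(4.03\,M_{d,p}\) for large \(p\). Handling small \(j\) directly does not remove this loss, which comes from the \(j/3\) bound at the heavily weighted large odd \(j\).

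\textbf{The missing observation} is that \(n_{j,p}=\lceil\log_2(p/\lfloor j/2\rfloor)\rceil\) is itself nonincreasing in \(j\), because \(\lfloor j/2\rfloor\) is nondecreasing. Apply Chebyshev's inequality to \(N_j\) and \(n_{j,p}\) directly, as the paper does. This gives \(T_{d,p}\le\frac{M_{d,p}}{p-1}\sum_{j=2}^pn_{j,p}\), so the floor enters only an unweighted sum, where it is harmless.

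Use \(\lceil x\rceil\le1+x\), the exact value \(\log_2p\) at \(j=2\), and, for \(j\ge3\), the bound \(\lfloor j/2\rfloor\ge(j-1)/2\) with the shift \(i=j-1\). Then
\[
 \sum_{j=2}^pn_{j,p}\le2p-3+\sum_{i=1}^{p-1}\log_2\frac pi
 \le2(p-1)+\frac1{\log2}\Bigl((p-1)\log p-\log\bigl((p-1)!\bigr)\Bigr).
\]
The bounds \(\log((p-1)!)\ge(p-1)\log(p-1)-(p-1)+1\) and \((p-1)\log\frac p{p-1}\le1\) show that the last bracket is at most \(p-1\). Hence \(T_{d,p}\le(2+1/\log2)M_{d,p}<4M_{d,p}\) for all \(p\ge2\) and \(d\ge1\).
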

\end{samepage}

\begin{proof}
\emph{1. Approximation error.}
Induction in total degree, using the magnitude bound in
Lemma~\ref{lem:notes-multiplication}, gives
\[
 |X_{\boldsymbol k}^{(p)}(\boldsymbol x)|
 \leq|\boldsymbol x^{\boldsymbol k}|
 \leq\delta^{|\boldsymbol k|},
 \qquad \boldsymbol x\in[-\delta,\delta]^d.
\]
For a product of degree \(j\), both approximate factors therefore
have magnitude at most \(\delta^m\), where \(m=\lfloor j/2\rfloor\).
Our choice \(m2^{n_{j,p}}\geq p\) and
\eqref{eq:notes-Mn-error} bound the local multiplication error by
\[
 \varepsilon_{n_{j,p}}\,
 \delta^{m2^{n_{j,p}+1}}
 \leq\delta^{2p}.
\]
Let \(E^{(p)}_{\boldsymbol k}\) be the uniform error in
\(X_{\boldsymbol k}^{(p)}\) on \([-\delta,\delta]^d\).
Errors inherited from the two factors are multiplied by quantities
of magnitude at most one. Since the degree-one factors are exact,
induction gives
\[
 E^{(p)}_{\boldsymbol k}
 \leq E^{(p)}_{\boldsymbol k^-}+E^{(p)}_{\boldsymbol k^+}+\delta^{2p}
 \leq(|\boldsymbol k|-1)\delta^{2p}.
\]
The same estimate applies when a factor is used twice. Taking the
weighted sum over monomials and adding the omitted series terms yields
\[
 \|f-\mathcal A_p\|_{L^\infty([-\delta,\delta]^d)}
 \leq\underbrace{A_f(p-1)\delta^{2p}}_{\text{network error}}
   +\underbrace{A_f\delta^{p+1}}_{\text{series tail}}.
\]
Since \((p-1)\delta^{p-1}
 \leq\sum_{k=1}^{p-1}\delta^k
 \leq\delta/(1-\delta)\), this proves
\eqref{eq:notes-multivariate-analytic-basic}.

\emph{2. Network size.}
There are \(m_j=\binom{d+j-1}{d-1}\) modules of degree \(j\).
These counts are nondecreasing in \(j\), while the iteration counts
\(n_{j,p}\) are nonincreasing. The average iteration count over all modules
is therefore no larger than the unweighted average over degrees by Chebyshev's sum inequality:
\[
 \frac{T_{d,p}}{M_{d,p}}
 \leq\frac1{p-1}\sum_{j=2}^{p}n_{j,p}.
\]
Using \(\lfloor j/2\rfloor\geq(j-1)/2\), \(\log((p-1)!)\geq(p-1)\log(p-1)-(p-1)+1\) and
\((p-1)\log(p/(p-1))\leq1\), we obtain
\[
 \begin{aligned}
 \sum_{j=2}^{p}n_{j,p}
 &\le p-1+\sum_{j=2}^{p}\log_2\frac{p}{\lfloor j/2\rfloor}
 \\
 &\le p-1+\log_2p+\sum_{j=3}^{p}\log_2\frac{2p}{j-1} 
 \\
 &\leq2(p-1)+\frac1{\log2}
          \sum_{j=1}^{p-1}\log\frac pj\\
 &\leq\left(2+\frac1{\log2}\right)(p-1)
 <4(p-1).
 \end{aligned}
\]
This proves \(T_{d,p}\leq4M_{d,p}\).

The modules contribute at most \(9T_{d,p}+3M_{d,p}\) nonzero
coefficients, and the final affine output contributes at most
\(M_{d,p}+d+1\). Since every module uses at least one iteration,
\(M_{d,p}\leq T_{d,p}\), giving the stated size bounds.
The width and coefficient bounds follow from the same construction
as in Theorem~\ref{thm:notes-multivariate-polynomial}.

\emph{3. Choosing the degree for a given depth.}
Since the error decays geometrically in \(p\) and the depth is
\(O(p^d)\), we take \(p\) of order \(L^{1/d}\). Specifically, set
\begin{equation}
 R_L=\left(\frac{d!L}{8}\right)^{1/d},
 \qquad p_L=\lfloor R_L\rfloor-d.
 \label{eq:notes-analytic-depth-choice}
\end{equation}
For \(p_L\geq2\), which holds whenever
\(L\geq8(d+2)^d/d!\), the network
\(\mathcal N_L=\mathcal A_{p_L}\) satisfies
\[
 2T_{d,p_L}\leq8M_{d,p_L}
 \leq\frac{8(p_L+d)^d}{d!}\leq L.
\]
Its size is at most
\(13T_{d,p_L}+d+1\leq\frac{13}{2}L+d+1\).
Since \(p_L+1\geq R_L-d\),
\eqref{eq:notes-multivariate-analytic-basic} gives, with
\(r=\log(1/\delta)\),
\begin{equation}
 \|f-\mathcal N_L\|_{L^\infty([-\delta,\delta]^d)}
 \leq\frac{A_f\delta^{-d}}{1-\delta}\,e^{-rR_L},
 \label{eq:notes-multivariate-analytic-optimized}
\end{equation}
which is \eqref{eq:notes-multivariate-rate} with the stated constants.
\end{proof}

For fixed \(d,\alpha,\beta,\delta\) and \(0<\varepsilon<1\),
choose \(p\geq2\) so that
\(\delta^{p+1}\leq(1-\delta)\varepsilon\).
This requires only \(p=O(\log(1/\varepsilon))\), so
\eqref{eq:notes-multivariate-analytic-basic} gives error at most
\(A_f\varepsilon\) with depth and size
\begin{equation}
 O\!\left((\log(1/\varepsilon))^d\right),
 \qquad \varepsilon\downarrow0.
 \label{eq:notes-analytic-epsilon-cost}
\end{equation}
In one dimension, \eqref{eq:notes-multivariate-rate} becomes
\(O(e^{-cL})\).

\begin{remark}[Scope of the analytic result]
The polynomial theorem holds on the full cube, but the analytic rate uses
\(0<\delta<1\) both for the coefficient tail and for the decay of
the monomial factors. At \(\delta=1\), absolute summability alone gives
no specified rate for the coefficient tail. Likewise, real analyticity
near the real cube does not ensure convergence of the Taylor series at
the origin throughout that cube.  A full-cube theorem for a general
holomorphic neighborhood requires additional approximation arguments.
An affine change of variables transfers the present result to another
box whenever the transformed target satisfies
\eqref{eq:notes-analytic-assumption} on a larger normalized cube.
\end{remark}

\begin{remark}[Comparison of the resource bounds]
For fixed \(d\), the bound \(S\leq\frac{13}{2}L+d+1\)
also yields an error bound \(O(\exp(-c_{\mathrm{size}}N^{1/d}))\)
under a size budget \(N\), for some \(c_{\mathrm{size}}>0\).
For absolutely convergent power series, E and Wang~\cite{E} obtained
interior-cube approximation by ReLU networks of width \(d+4\), with
an exponential term \(\exp(-cL^{1/(2d)})\) in their depth parameter.
Their fixed width gives \(O_d(L)\) parameters, so the size exponent is
\(1/(2d)\) as well. Their construction uses input and output skip
connections before conversion to a standard feedforward network; its
reported width counts the full hidden layers, as does ours. Our
network retains skip connections between modules, so the two width
bounds refer to different architectures.

For functions with a holomorphic extension to a polyellipse,
Opschoor, Schwab, and Zech~\cite[Theorem~3.6]{Opschoor2022} obtained
ReLU networks of size at most \(N\), depth
\(O(N^{1/(d+1)}\log N)\), and \(W^{1,\infty}\)-error
\(O(\exp(-bN^{1/(d+1)}))\) on the full cube.
Their RePU result~\cite[Theorem~3.10]{Opschoor2022} gives size
\(O(N)\), depth \(O(\log N)\), and \(W^{k,\infty}\)-error
\(O(\exp(-bN^{1/d}))\) for every fixed integer \(k\geq0\).

The present upper bound has a larger size exponent than these ReLU
bounds and the same exponent as the RePU bound. It uses the fixed
globally Lipschitz activation \(\psi\), weights bounded independently
of accuracy, and the uniform norm on an interior cube under
\eqref{eq:notes-analytic-assumption}. The activation, approximation
setting, and depth bounds differ; no matching lower bound for
Definition~\ref{def:notes-network-model} is established here.

The improvement follows from the local square remainder, the magnitude
control of the multiplication module, and the assignment of depths by
degree. Both linear and nonlinear hidden layers contribute to our
depth and width, and the size includes weights on skip connections.
\end{remark}

%% file: sections/conclusion.tex
\section{Conclusion}
\label{sec:Conclusion}

The results connect restrictions on a shared generator with the behavior
of the exact approximation remainder. When a continuous piecewise linear
generator with finitely many pieces supplies both the forcing term and
the composition map, every \(C^3\) output is at most quadratic.
For non-affine quadratics, \(\beta\geq1/4\), and a fixed point with
nonzero target value keeps the uniform norm of
\(f\circ\varphi^{\circ L}\) bounded below by a positive constant.
The iterated target therefore provides no additional uniform decay,
and each fixed representation has truncation error of order \(\beta^L\).

For the square target, we construct a generator satisfying
\(0\leq\varphi(x)\leq x^2/\alpha\), with \(\alpha>1\). Its iterates
drive the composed target in the remainder to zero, giving doubly
exponential accuracy with fixed geometric weighting. The construction
uses one fixed nonpolynomial, globally \(1\)-Lipschitz \(C^1\)
activation and network coefficients bounded independently of accuracy.

Repeated composition gives a square approximation whose error
is controlled by increasingly high powers of the input magnitude. The associated multiplication module preserves
the small magnitudes of intermediate factors. Composing these modules
gives a doubly exponential error bound in total depth for each fixed
polynomial. Together, these properties
allow balanced monomial factorizations to use less depth for
higher-degree products on interior cubes. The average module depth stays
bounded, giving total depth and size \(O(p^d)\) through degree
\(p\). For power series with absolutely summable coefficients on
\([-1,1]^d\), this allocation transfers the exponential truncation decay
to uniform error \(O(e^{-cL^{1/d}})\) on interior cubes, with
depth at most \(L\) and size \(O(L)\). These bounds quantify how local
errors can guide depth allocation within this compositional construction.

Further work could investigate other polynomial bases, including
Chebyshev polynomials, to treat general holomorphic neighborhoods; see
\cite{MasonHandscomb} and \cite[Chapters~3 and~8]{TrefethenATAP}.
Explicit costs for replacing skip connections by consecutive layers and
error bounds for finite-precision evaluation would also clarify how the
construction translates into numerical implementation.

%% file: ref.bib
@article{Barron,
  author  = {Barron, A. R.},
  title   = {Universal approximation bounds for superpositions of a sigmoidal function},
  journal = {IEEE Trans. Inform. Theory},
  volume  = {39},
  number  = {3},
  pages   = {930--945},
  year    = {1993}
}

@article{Cybenko,
  author  = {Cybenko, G.},
  title   = {Approximation by superpositions of a sigmoidal function},
  journal = {Math. Control Signals Syst.},
  volume  = {2},
  number  = {4},
  pages   = {303--314},
  year    = {1989}
}

@article{Daubechies,
  author  = {Daubechies, I. and DeVore, R. and Foucart, S. and Hanin, B. and Petrova, G.},
  title   = {Nonlinear approximation and (deep) {ReLU} networks},
  journal = {Constr. Approx.},
  volume  = {55},
  number  = {1},
  pages   = {127--172},
  year    = {2022}
}

@misc{DoleanMontanelli2026,
  author       = {Dolean, V. and Montanelli, H.},
  title        = {Machine Learning for Scientific Computing and Numerical Analysis},
  howpublished = {Lecture notes, {\'E}cole Polytechnique},
  year         = {2026},
  note         = {HAL: hal-04976856v2},
  url          = {https://hal.science/hal-04976856v2}
}

@book{Despres,
  author    = {Despr{\'e}s, B.},
  title     = {Neural Networks and Numerical Analysis},
  publisher = {De Gruyter},
  address   = {Berlin/Boston},
  year      = {2022}
}

@article{DespresAncellin2020,
  author  = {Despr{\'e}s, B. and Ancellin, M.},
  title   = {A functional equation with polynomial solutions and application to neural networks},
  journal = {C. R. Math.},
  volume  = {358},
  number  = {9--10},
  pages   = {1059--1072},
  year    = {2020}
}

@article{Devore1,
  author  = {DeVore, R. and Hanin, B. and Petrova, G.},
  title   = {Neural network approximation},
  journal = {Acta Numer.},
  volume  = {30},
  pages   = {327--444},
  year    = {2021}
}

@article{E,
  author  = {E, Weinan and Wang, Qingcan},
  title   = {Exponential convergence of the deep neural network approximation for analytic functions},
  journal = {Sci. China Math.},
  volume  = {61},
  number  = {10},
  pages   = {1733--1740},
  year    = {2018},
  doi     = {10.1007/s11425-018-9387-x},
  url     = {https://link.springer.com/article/10.1007/s11425-018-9387-x}
}

@article{Elbrachter,
  author  = {Elbr{\"a}chter, D. and Perekrestenko, D. and Grohs, P. and B{\"o}lcskei, H.},
  title   = {Deep neural network approximation theory},
  journal = {IEEE Trans. Inform. Theory},
  volume  = {67},
  number  = {5},
  pages   = {2581--2623},
  year    = {2021}
}

@book{Goodfellow,
  author    = {Goodfellow, I. and Bengio, Y. and Courville, A.},
  title     = {Deep Learning},
  publisher = {MIT Press},
  address   = {Cambridge, MA},
  year      = {2016}
}

@article{He,
  author  = {He, J. and Li, L. and Xu, J.},
  title   = {{ReLU} deep neural networks from the hierarchical basis perspective},
  journal = {Comput. Math. Appl.},
  volume  = {120},
  pages   = {105--114},
  year    = {2022}
}

@article{Jiao,
  author  = {Jiao, Y. and Wang, Y. and Yang, Y.},
  title   = {Approximation bounds for norm constrained neural networks with applications to regression and {GANs}},
  journal = {Appl. Comput. Harmon. Anal.},
  volume  = {65},
  pages   = {249--278},
  year    = {2023}
}

@book{KuczmaChoczewskiGer,
  author    = {Kuczma, M. and Choczewski, B. and Ger, R.},
  title     = {Iterative Functional Equations},
  series    = {Encyclopedia of Mathematics and its Applications},
  volume    = {32},
  publisher = {Cambridge University Press},
  address   = {Cambridge},
  year      = {1990}
}

@article{LeCun,
  author  = {LeCun, Y. and Bengio, Y. and Hinton, G.},
  title   = {Deep learning},
  journal = {Nature},
  volume  = {521},
  number  = {7553},
  pages   = {436--444},
  year    = {2015}
}

@inproceedings{Liang,
  author    = {Liang, S. and Srikant, R.},
  title     = {Why deep neural networks for function approximation?},
  booktitle = {5th International Conference on Learning Representations ({ICLR})},
  year      = {2017},
  url       = {https://openreview.net/forum?id=SkpSlKIel}
}

@book{MasonHandscomb,
  author    = {Mason, J. C. and Handscomb, D. C.},
  title     = {Chebyshev Polynomials},
  publisher = {Chapman \& Hall/CRC},
  address   = {Boca Raton, FL},
  year      = {2003}
}

@article{Montanelli1,
  author  = {Montanelli, H. and Du, Q.},
  title   = {New error bounds for deep {ReLU} networks using sparse grids},
  journal = {SIAM J. Math. Data Sci.},
  volume  = {1},
  number  = {1},
  pages   = {78--92},
  year    = {2019}
}

@article{Montanelli2,
  author  = {Montanelli, H. and Yang, H.},
  title   = {Error bounds for deep {ReLU} networks using the {Kolmogorov--Arnold} superposition theorem},
  journal = {Neural Netw.},
  volume  = {129},
  pages   = {1--6},
  year    = {2020}
}

@article{Pinkus,
  author  = {Pinkus, A.},
  title   = {Approximation theory of the {MLP} model in neural networks},
  journal = {Acta Numer.},
  volume  = {8},
  pages   = {143--195},
  year    = {1999}
}

@misc{PetersenZech2024,
  author       = {Petersen, P. and Zech, J.},
  title        = {Mathematical Theory of Deep Learning},
  howpublished = {arXiv preprint arXiv:2407.18384},
  year         = {2024},
  eprint       = {2407.18384},
  archivePrefix = {arXiv},
  primaryClass = {cs.LG},
  note         = {Version 4, revised 15 January 2026},
  url          = {https://arxiv.org/abs/2407.18384v4}
}

@article{Poggio,
  author  = {Poggio, T. and Mhaskar, H. and Rosasco, L. and Miranda, B. and Liao, Q.},
  title   = {Why and when can deep-but not shallow-networks avoid the curse of dimensionality: A review},
  journal = {Internat. J. Automat. Comput.},
  volume  = {14},
  number  = {5},
  pages   = {503--519},
  year    = {2017}
}

@article{Shen1,
  author  = {Shen, Z. and Yang, H. and Zhang, S.},
  title   = {Deep network approximation characterized by number of neurons},
  journal = {Commun. Comput. Phys.},
  volume  = {28},
  number  = {5},
  pages   = {1768--1811},
  year    = {2020}
}

@article{Shen2,
  author  = {Shen, Z. and Yang, H. and Zhang, S.},
  title   = {Deep network with approximation error being reciprocal of width to power of square root of depth},
  journal = {Neural Comput.},
  volume  = {33},
  number  = {4},
  pages   = {1005--1036},
  year    = {2021}
}

@article{Shen3,
  author  = {Shen, Z. and Yang, H. and Zhang, S.},
  title   = {Deep network approximation: Achieving arbitrary accuracy with fixed number of neurons},
  journal = {J. Mach. Learn. Res.},
  volume  = {23},
  number  = {276},
  pages   = {1--60},
  year    = {2022}
}

@article{Siegel2023,
  author  = {Siegel, J. W.},
  title   = {Optimal approximation rates for deep {ReLU} neural networks on {Sobolev} and {Besov} spaces},
  journal = {J. Mach. Learn. Res.},
  volume  = {24},
  number  = {357},
  pages   = {1--52},
  year    = {2023}
}

@book{SinghManhas,
  author    = {Singh, R. K. and Manhas, J. S.},
  title     = {Composition Operators on Function Spaces},
  series    = {North-Holland Mathematics Studies},
  volume    = {179},
  publisher = {North-Holland},
  address   = {Amsterdam},
  year      = {1993}
}

@book{TrefethenATAP,
  author    = {Trefethen, L. N.},
  title     = {Approximation Theory and Approximation Practice, {Extended Edition}},
  publisher = {Society for Industrial and Applied Mathematics},
  address   = {Philadelphia, PA},
  year      = {2019}
}

@article{Opschoor2022,
  author  = {Opschoor, J. A. A. and Schwab, C. and Zech, J.},
  title   = {Exponential {ReLU} {DNN} expression of holomorphic maps in high dimension},
  journal = {Constr. Approx.},
  volume  = {55},
  number  = {1},
  pages   = {537--582},
  year    = {2022},
  doi     = {10.1007/s00365-021-09542-5}
}

@article{XuZhang2022,
  author  = {Xu, Y. and Zhang, H.},
  title   = {Convergence of deep convolutional neural networks},
  journal = {Neural Netw.},
  volume  = {153},
  pages   = {553--563},
  year    = {2022}
}

@article{XuZhang2024,
  author  = {Xu, Y. and Zhang, H.},
  title   = {Uniform convergence of deep neural networks with {Lipschitz} continuous activation functions and variable widths},
  journal = {IEEE Trans. Inform. Theory},
  volume  = {70},
  number  = {10},
  pages   = {7125--7142},
  year    = {2024}
}

@article{Yarotsky,
  author  = {Yarotsky, D.},
  title   = {Error bounds for approximations with deep {ReLU} networks},
  journal = {Neural Netw.},
  volume  = {94},
  pages   = {103--114},
  year    = {2017}
}

@article{Zhou1,
  author  = {Zhou, D.-X.},
  title   = {Universality of deep convolutional neural networks},
  journal = {Appl. Comput. Harmon. Anal.},
  volume  = {48},
  number  = {2},
  pages   = {787--794},
  year    = {2020}
}
